\documentclass[letterpaper]{article} 
\usepackage{aaai2027}  
\usepackage[hyphens]{url}  
\usepackage{graphicx} 
\usepackage{natbib}  
\usepackage{caption} 
\usepackage{algorithm}
\usepackage{algorithmic} 
\usepackage{booktabs}
\usepackage{makecell}
\usepackage{array}
\usepackage{tabularx}
\newcolumntype{Y}{>{\raggedright\arraybackslash}X}

\usepackage{amsmath}
\usepackage{amssymb}
\usepackage{amsthm}
\usepackage{mathtools}
\usepackage{bm}

\newtheorem{theorem}{Theorem}
\newtheorem{lemma}{Lemma}
\newtheorem{corollary}{Corollary}
\newtheorem{proposition}{Proposition}
\newtheorem{assumption}{Assumption}
\newtheorem{condition}{Condition}
\theoremstyle{definition}

\theoremstyle{remark}
\newtheorem{remark}{Remark}

\usepackage{newfloat}
\usepackage{listings}
\DeclareCaptionStyle{ruled}{labelfont=normalfont,labelsep=colon,strut=off} 
\floatstyle{ruled}
\newfloat{listing}{tb}{lst}{}
\floatname{listing}{Listing}

\title{Quantifying Depth Sufficiency in Residual Neural Networks: A First-Order Criterion}

\author{
Zeyu Liu\textsuperscript{\rm 1},
Jinhao Zhang\textsuperscript{\rm 2},
Yunquan Zhang\textsuperscript{\rm 1},
Guangming Tan\textsuperscript{\rm 1},\\
Xiang Gao\textsuperscript{\rm 3},
Fangming Liu\textsuperscript{\rm 4},
Daning Cheng\textsuperscript{\rm 1}\corresponding
}

\affiliations{
\textsuperscript{\rm 1}Institute of Computing Technology, Chinese Academy of Sciences, Beijing, China\\
\textsuperscript{\rm 2}Beijing University of Posts and Telecommunications, Beijing, China\\
\textsuperscript{\rm 3}High Performance Computing Facility Research Center, Zhejiang Lab, Hangzhou, China\\
\textsuperscript{\rm 4}Huazhong University of Science and Technology, Wuhan, China\\
}

\begin{document}

\maketitle

\begin{abstract}
How can we determine whether a trained neural network is already deep enough? We study this under a fixed function-preserving residual-growth protocol specifying insertion locations, residual families, zero-output initializations, and zero-state first-order updates. We define first-order residual depth saturation as the absence of a strict local decrease from every admissible insertion. We prove residual non-degeneracy is necessary and sufficient: additional depth has first-order value exactly when conditional activation gradients have a nonzero projection onto at least one admissible residual tangent space. This boundary is shared by descent-compatible zero-state updates and invariant under regular local reparameterizations preserving that tangent space.   Under residual-signal realizability, raw activation-gradient vanishing exactly certifies saturation. Across ResNets, GPT-2-style models, and continued-pretrained Pythia checkpoints, the maximum activation-gradient norm decreases toward a low-signal regime with depth.  Function-preserving growth also achieves converged performance competitive with training from scratch. These results support activation-gradient magnitude as a conservative diagnostic of the remaining empirical first-order value of residual depth.

\end{abstract}


\section{Introduction}
 Increasing depth is a standard way to expand neural-network capacity,
yet its marginal benefit does not persist indefinitely: models often
improve as layers are added and then enter a regime in which further
depth yields little gain in test dataset. This raises a basic question for
model scaling and adaptive growth: given a trained network, how can we
determine whether additional residual depth still has useful
optimization value? Comparing independently trained models of different
depths cannot isolate this question, since their performance differences
may reflect initialization, optimization difficulty, training budget,
regularization, or finite-sample variation. Function-preserving residual
insertion \cite{chen2016net2net,wei2016networkmorphism} provides a
cleaner setting: a residual block initialized to produce globally zero
output leaves the represented function unchanged, so the marginal value
of new depth can be studied at a common initialization. Existing growth
methods, however, provide no exact criterion for when residual growth
should terminate.

Some works on normalized residual networks introduced a residual
non-degeneracy condition and showed that it is sufficient for
constructing a locally improving expanded model
\cite{cheng2026qualitative}. This leaves open a logically prior
question: does the condition exactly characterize whether any admissible
first-order residual direction remains? We study this question relative
to a fixed \emph{residual-growth protocol} that specifies, before observing the gradient signal, the
admissible insertion candidates, the parameterized residual family with
its function-preserving initialization, and a zero-state,
descent-compatible first-order update for the inserted parameters. A
candidate is operationally saturated when this update cannot produce a
strict decrease from the function-preserving initialization for any
sufficiently small positive step. Our main theorem shows that this
boundary does not depend on the optimizer chosen within the stated
first-order class.  

The tangent-space projection is the appropriate
parameterization-independent quantity, because a raw activation gradient
may contain components that the selected residual family cannot express.
For standard zero-output blocks---a feature-producing subnetwork
followed by a trainable output projection---the parameter criterion
reduces to the cross-gradient between the activation gradient and the
residual features; holding the feature parameters fixed loses no
first-order directions, since their derivative vanishes at a zero output
projection. We further introduce a checkpoint-specific
\emph{residual-signal realizability} condition, requiring only that the
current conditional activation-gradient signal lie in this tangent
space. Under realizability the projection preserves the full signal, and
raw activation-gradient vanishing becomes a necessary and sufficient
saturation certificate.  

Our experiments combine two complementary forms of evidence. First, we
measure the sample-wise activation-gradient norm across depth in ResNets
  trained on CIFAR-10, CIFAR-100, and ImageNet-100, in
Pythia checkpoints continued-pretrained on FineWeb-Edu, and in GPT-2-style
models trained from scratch on the same corpus. Because orthogonal projection cannot
increase norm, this score upper-bounds the empirical projected
residual-growth value on every fixed sample. Second, on four controlled CIFAR-10 ResNet
configurations we insert one globally zero-output residual block at a
time, comparing function-preserving
growth with training the same final-depth architectures from random
initialization to control for optimization disadvantages of the growth
procedure. Additional blocks produce realized gains at shallower depths,
and these gains disappear once the sample-wise score enters a stable
low-signal regime.

Our contributions are fourfold. First, we introduce an operational
notion of first-order residual depth saturation and prove that its
boundary is shared by all zero-state, descent-compatible first-order
updates. Second, we characterize this boundary through the
parameterization-invariant projection of the conditional activation
gradient onto the residual tangent space, and derive the cross-gradient
criterion for standard zero-output blocks. Third, we establish exact and
approximate residual-signal realizability conditions under which raw
activation gradients can replace the projected criterion. Finally, we combine broad
fixed-sample gradient measurements with controlled one-block growth and
from-scratch comparisons, supporting low sample-wise activation-gradient
energy as a conservative practical indicator that little empirical
first-order value of residual depth remains.

\section{Related Work}

Net2Net and Network Morphism introduced function-preserving operators that
widen or deepen a trained network without changing the represented function
\cite{chen2016net2net,wei2016networkmorphism}. This idea underlies efficient
Transformer pre-training through progressive stacking, parameter reuse and
knowledge inheritance, learned or lossless expansion operators, and
variance-transfer initialization
\cite{gong2019progressive,gu2021compoundgrow,du2024stacking,chen2022bert2bert,qin2022knowledge,wang2023ligo,wang2024lemon,yuan2023incremental}.
These works address how to grow efficiently and presuppose that added
capacity is useful; none gives an exact criterion for when residual growth
stops having local optimization value.

Deciding when and where to grow is a classical theme of constructive
learning \cite{fahlman1990cascade,ash1989dynamic}. Recent policies schedule
depth growth via validation heuristics or fitting risk
\cite{wen2020autogrow,wu2024whentogrow}, and local scores select beneficial
insertions through splitting directions, gradient norms, expressivity
bottlenecks, topological derivatives, and natural-expansion scores
\cite{wu2019splitting,wu2020firefly,evci2022gradmax,verbockhaven2024growing,krishnanunni2025topological,mitchell2023selfexpanding}.
These criteria are sufficient conditions or heuristics for beneficial
growth; we instead characterize exactly when no admissible
function-preserving residual insertion admits a first-order improvement.

Residual networks behave like ensembles of shallow paths, and stochastic
depth and layer pruning of large language models reveal substantial
redundancy in trained depth
\cite{veit2016ensembles,huang2016stochastic,gromov2025unreasonable,men2025shortgpt};
zero-initialized residual branches are benign and trainable
\cite{zhang2019fixup,bachlechner2021rezero}, supporting our zero-output
initialization. Unlike scaling-law comparisons of independently trained
models \cite{kaplan2020scaling,levine2020limits}, our function-preserving
setting isolates the marginal first-order value of additional depth at a
common checkpoint.

Closest to our work,
\citeauthor{cheng2026qualitative}~\shortcite{cheng2026qualitative} show that
residual non-degeneracy (Condition~1) is sufficient for constructing a
locally improving expanded model; we prove that it is also necessary under a
fixed zero-state protocol, yielding a saturation boundary invariant to the
first-order optimizer and to reparameterizations preserving the residual
tangent space.
 

\section{Problem Setup, Assumptions, and Notation}
\label{sec:preliminaries}

\subsection{Problem Setup}
\label{sec:problem-setup}

Let $\mathcal{X}$ be the input space, let $\mathcal{Y}$ be the label
space, and let $\mathcal{D}$ be a probability distribution on
$\mathcal{X}\times\mathcal{Y}$. We write $(x,y)\sim\mathcal{D}$,
where $x$ is an input and $y$ is its label. Every predictor considered
below maps $\mathcal{X}$ to $\mathbb{R}^{d_{\mathrm{out}}}$, and the
loss is a measurable function
$\ell:\mathbb{R}^{d_{\mathrm{out}}}\times\mathcal{Y}\to[0,\infty)$.
For any measurable predictor $f$, define
$R(f):=\mathbb{E}_{(x,y)\sim\mathcal{D}}[\ell(f(x),y)]$ whenever the
expectation is finite. For a fixed sample
$S=\{(x_i,y_i)\}_{i=1}^{M}$, define
$\mathcal{L}_{S}(f):=M^{-1}\sum_{i=1}^{M}\ell(f(x_i),y_i)$.

We follow the residual-insertion notation of
\cite{cheng2026qualitative}. Let $f_{\mathrm{old}}^{*}$ be a trained
reference model with $R(f_{\mathrm{old}}^{*})<\infty$. Before observing
any activation-gradient signal or growth outcome, we fix a
\emph{residual-growth protocol}. The protocol specifies a finite set
$\mathcal{I}$ of admissible insertion candidates. Each candidate
$l\in\mathcal{I}$ consists of an insertion location, a parameterized
residual family, a designated function-preserving initialization, and a
first-order optimizer direction rule for the newly inserted parameters.

At candidate $l$, decompose the reference model as
$f_{\mathrm{old}}^{*}=f_{\mathrm{top}}^{(l)}\circ
f_{\mathrm{bot}}^{(l)}$ and define
$z_l:=f_{\mathrm{bot}}^{(l)}(x)\in\mathbb{R}^{N_l}$, where $N_l$ is
the hidden-state dimension at candidate $l$. Let $\mu_l$ denote the
distribution of $z_l$ induced by $(x,y)\sim\mathcal{D}$. When a single
candidate is fixed, we suppress $l$ and write
$f_{\mathrm{top}}$, $f_{\mathrm{bot}}$, $z$, and $\mu$.

The parameterized residual family at candidate $l$ is
$\mathcal{F}_{\mathrm{res}}^{(l)}
:=\{h_{l,\theta_l}:\theta_l\in\Theta_l\}$, where
$\Theta_l\subseteq\mathbb{R}^{p_l}$ and
$0\in\operatorname{int}(\Theta_l)$. The designated local origin
satisfies $h_{l,0}(z)=0$ for every $z\in\mathbb{R}^{N_l}$; equivalently,
$h_{l,0}\equiv0$ on the full ambient hidden-state space. The notation
$\theta_l=0$ denotes a local coordinate centered at this initialization
and does not require every raw parameter in the residual branch to be
numerically zero.

The expanded model is
$f_{l,\theta_l}(x):=f_{\mathrm{top}}^{(l)}
\bigl(z_l+h_{l,\theta_l}(z_l)\bigr)$. Since $h_{l,0}\equiv0$, the
insertion preserves the represented function pointwise and
$f_{l,0}=f_{\mathrm{old}}^{*}$. Define the population and empirical
objectives associated with candidate $l$ by
$\Phi_l(\theta_l):=R(f_{l,\theta_l})$ and
$\Phi_{S,l}(\theta_l):=\mathcal{L}_{S}(f_{l,\theta_l})$.

All components of the residual-growth protocol are fixed in advance.
The candidates, residual families, parameterizations, designated
origins, and optimizer direction rules may not be changed after the
activation-gradient signal has been observed. All internal
hyperparameters that determine each direction rule are fixed; the
positive scalar step size used in the local analysis remains free.  


\subsection{Assumptions}
\label{sec:assumptions}

The assumptions below have distinct roles.
Assumptions~\ref{ass:first-order-regularity}
and~\ref{ass:first-order-optimizer} support the main saturation
theorem. Assumption~\ref{ass:tangent-completeness} is invoked only when
the exact projected criterion is replaced by the raw
activation-gradient criterion. Assumption~\ref{ass:independent-probe}
is used only for the supplementary finite-sample consistency result.

\begin{assumption}[First-order regularity]
\label{ass:first-order-regularity}
For every candidate $l\in\mathcal{I}$ and for
$\mathcal{D}$-almost every $(x,y)$, the map
$z\mapsto\ell(f_{\mathrm{top}}^{(l)}(z),y)$ is Fr\'echet differentiable
at $z=z_l$. Its measurable gradient is denoted by
$q_l(z_l,y):=\nabla_{z_l}\ell(f_{\mathrm{top}}^{(l)}(z_l),y)
\in\mathbb{R}^{N_l}$. Define the conditional population signal by
$m_l(z):=\mathbb{E}[q_l(z_l,y)\mid z_l=z]$.

For $\mu_l$-almost every $z$, the map
$\theta_l\mapsto h_{l,\theta_l}(z)$ is Fr\'echet differentiable at
$\theta_l=0$. Its measurable Jacobian with respect to $\theta_l$,
evaluated at the origin, is denoted by
$J_l(z):=J_{\theta_l}h_{l,0}(z)
\in\mathbb{R}^{N_l\times p_l}$, where $J_{\theta_l}$ denotes the
Jacobian with respect to the active residual coordinate $\theta_l$.

There exists a neighborhood of the origin on which $\Phi_l$ is finite.
The map $\Phi_l$ is Fr\'echet differentiable at $0$, and its derivative
is obtained by interchanging sample-wise differentiation and
expectation. For a fixed sample $S$, empirical statements are understood
on samples for which all corresponding sample-wise derivatives exist;
an i.i.d. sample has this property almost surely under the preceding
conditions.

We further assume
$\mathbb{E}\|q_l(z_l,y)\|_2^2<\infty$ and
$\mathbb{E}\|J_l(z_l)\|_{\sigma}^2<\infty$, where
$\|\cdot\|_{\sigma}$ is the operator norm. Conditional Jensen's
inequality then gives
$m_l\in L_2(\mu_l;\mathbb{R}^{N_l})$, and Cauchy--Schwarz ensures that
the population residual-parameter gradient defined below is finite.
\end{assumption}

 A standard sufficient condition for the interchange in
Assumption~\ref{ass:first-order-regularity} is a local integrable
Lipschitz envelope: for each candidate $l$, there exist $\rho_l>0$ and
an integrable random variable $B_l(x,y)$ such that
$|\ell(f_{l,\theta}(x),y)-\ell(f_{l,0}(x),y)|
\le B_l(x,y)\|\theta\|_2$ whenever $\|\theta\|_2\le\rho_l$.
For any sequence $\theta_k\to0$, sample-wise Fr\'echet differentiability
makes the normalized remainder converge pointwise to zero. The local
Lipschitz bound controls both the difference quotient and the norm of
its derivative by $B_l$, so the normalized remainder is dominated by
$2B_l$. Dominated convergence then yields an $o(\|\theta_k\|_2)$
population remainder, which gives Fr\'echet differentiability of
$\Phi_l$ and the stated differentiation--expectation interchange.
Uniform integrability of the local difference quotients is an
alternative sufficient condition.

\paragraph{First-order residual gradients and Condition~1.}
For a fixed sample $S$, let
$z_{l,i}:=f_{\mathrm{bot}}^{(l)}(x_i)$ and
$q_{l,i}:=q_l(z_{l,i},y_i)$. Define
$g_{\mathrm{pop}}^{(l)}
:=\mathbb{E}[J_l(z_l)^{\top}q_l(z_l,y)]$ and
$g_S^{(l)}
:=M^{-1}\sum_{i=1}^{M}J_l(z_{l,i})^{\top}q_{l,i}$.
For a direction $u\in\mathbb{R}^{p_l}$, write
$D\Phi_l(0)[u]$ and $D\Phi_{S,l}(0)[u]$ for the Fr\'echet directional
derivatives at the origin.

\begin{condition}[Residual non-degeneracy]
\label{cond:residual-nondegeneracy}
At candidate $l$, population residual non-degeneracy holds if there
exists $v_{\mathrm{pop}}^{(l)}\in\mathbb{R}^{p_l}$ such that
$D\Phi_l(0)[v_{\mathrm{pop}}^{(l)}]<0$. For a fixed sample $S$, its
empirical counterpart holds if there exists
$v_S^{(l)}\in\mathbb{R}^{p_l}$ such that
$D\Phi_{S,l}(0)[v_S^{(l)}]<0$.
\end{condition}

\begin{assumption}[Fixed zero-state descent-compatible first-order update]
\label{ass:first-order-optimizer}
At every candidate $l\in\mathcal{I}$, the optimizer direction rule and
all internal hyperparameters that determine it are fixed as part of the
residual-growth protocol. We call the optimizer \emph{zero-state} when
the active residual coordinate is initialized at $\theta_l=0$ and every
optimizer state variable associated with $\theta_l$ is initialized at
its neutral zero value. We assume that the optimizer used by the
protocol is zero-state.

Given the exact population or full-sample empirical residual-parameter
gradient $g\in\mathbb{R}^{p_l}$, the local update has the form
$\theta_l^{+}=\eta d_l(g)$, where $\eta>0$ is a free scalar step size
and $d_l:\mathbb{R}^{p_l}\to\mathbb{R}^{p_l}$ is the fixed direction
map. We assume $d_l(0)=0$ and $g^{\top}d_l(g)<0$ for every $g\neq0$.
Since $0\in\operatorname{int}(\Theta_l)$, every such direction is
feasible for all sufficiently small positive $\eta$.

This class includes gradient descent and positive-definite
preconditioned gradient descent. It also includes the first
bias-corrected Adam or AdamW update at zero state: coordinatewise,
$d_l(g)_j=-g_j/(|g_j|+\epsilon_{\mathrm{Adam}})$, where
$\epsilon_{\mathrm{Adam}}>0$ is the numerical-stability constant.
Hence $d_l(0)=0$ and $g^{\top}d_l(g)<0$ for $g\neq0$; decoupled weight
decay contributes no
first-step drift because the active residual coordinate is zero. The
analysis does not cover Hessian-based updates, externally injected
perturbations, nonzero initial optimizer states, or stochastic escape
from a zero full gradient.
\end{assumption}

\paragraph{Population and empirical first-order saturation.}
A candidate $l$ is \emph{population first-order improvable} if there
exists $\bar\eta_l>0$ such that, for every
$\eta\in(0,\bar\eta_l)$,
$\Phi_l(\eta d_l(g_{\mathrm{pop}}^{(l)}))<\Phi_l(0)$. It is
\emph{population first-order saturated} otherwise. The empirical notions
are defined by replacing $\Phi_l$ and $g_{\mathrm{pop}}^{(l)}$ with
$\Phi_{S,l}$ and $g_S^{(l)}$; the corresponding step-size threshold is
denoted by $\bar\eta_{S,l}$. The reference model is
\emph{first-order depth-saturated} relative to the fixed protocol if
every candidate is population first-order saturated. Unless explicitly
qualified, ``first-order saturated'' refers to the population notion.

\paragraph{Residual tangent-space objects.}
For each candidate $l$, let
$\mathcal{H}_l:=L_2(\mu_l;\mathbb{R}^{N_l})$ with inner product
$\langle a,b\rangle_{\mathcal{H}_l}
:=\mathbb{E}_{z_l\sim\mu_l}[a(z_l)^{\top}b(z_l)]$ and norm
$\|a\|_{\mathcal{H}_l}:=\langle a,a\rangle_{\mathcal{H}_l}^{1/2}$.
Define the residual tangent operator
$A_l:\mathbb{R}^{p_l}\to\mathcal{H}_l$ by
$(A_lu)(z):=J_l(z)u$. Since $\operatorname{Range}(A_l)$ is
finite-dimensional, it is closed; define
$\mathcal{T}_h^{(l)}:=\operatorname{Range}(A_l)$ and let $\Pi_l$ be the
orthogonal projector onto $\mathcal{T}_h^{(l)}$.

For the fixed sample $S$, define the empirical Hilbert space
$\mathcal{H}_{S,l}:=(\mathbb{R}^{N_l})^M$ with
$\langle a,b\rangle_{S,l}
:=M^{-1}\sum_{i=1}^{M}a_i^{\top}b_i$ and
$\|a\|_{S,l}:=\langle a,a\rangle_{S,l}^{1/2}$.
Define
$q_{S,l}:=(q_{l,1},\ldots,q_{l,M})\in\mathcal{H}_{S,l}$ and its matrix
representation
$Q_S^{(l)}:=[q_{l,1},\ldots,q_{l,M}]
\in\mathbb{R}^{N_l\times M}$.
The stacked tangent operator
$A_{S,l}:\mathbb{R}^{p_l}\to\mathcal{H}_{S,l}$ is
$A_{S,l}u:=(J_l(z_{l,1})u,\ldots,J_l(z_{l,M})u)$, and $\Pi_{S,l}$
denotes the orthogonal projector onto $\operatorname{Range}(A_{S,l})$.

\begin{assumption}[Residual-signal realizability]
\label{ass:tangent-completeness}
At every candidate to which the raw activation-gradient criterion is
applied, the current conditional activation-gradient signal is
realizable by the fixed residual tangent space:
$m_l\in\mathcal{T}_h^{(l)}$.

The stronger equality $\mathcal{T}_h^{(l)}=\mathcal{H}_l$ is called
\emph{population tangent completeness}. Since
$\dim\mathcal{T}_h^{(l)}\le p_l<\infty$, exact population tangent
completeness is impossible whenever $\mathcal{H}_l$ is
infinite-dimensional, as is typical for non-atomic hidden-state
distributions.

On the fixed sample $S$, residual-signal realizability means
$q_{S,l}\in\operatorname{Range}(A_{S,l})$. The stronger condition that
$A_{S,l}$ be surjective onto $\mathcal{H}_{S,l}$ is called
\emph{sample-wise tangent completeness}.
\end{assumption}

\begin{assumption}[Independent finite-sample probing]
\label{ass:independent-probe}
For the supplementary finite-sample consistency result, all
probabilities are conditional on the fixed reference model and the
complete residual-growth protocol. Let
$S_n^{\mathrm{probe}}
:=\{(\widetilde x_i,\widetilde y_i)\}_{i=1}^{n}
\overset{\mathrm{i.i.d.}}{\sim}\mathcal{D}^{n}$ be independent of the
data and randomness used to train $f_{\mathrm{old}}^{*}$. In that
result, all empirical quantities are instantiated with
$S=S_n^{\mathrm{probe}}$ and $M=n$.

For every $l\in\mathcal{I}$, assume
$\mathbb{E}\|J_l(z_l)^{\top}q_l(z_l,y)\|_2^2<\infty$. As $n$
increases, the reference model and the complete protocol remain fixed;
adaptive selection of a new candidate or residual family after observing
the probe sample is not covered.
\end{assumption}

\paragraph{Practical motivation.}
We focus on first-order optimization because gradient-based methods are
the practical default for modern large-scale neural networks, whereas
exact and structured second-order methods introduce substantially larger
memory and computational costs \cite{anil2020scalable}. Residual-signal
realizability is motivated by common ResNet and Transformer branches,
which typically consist of a feature-producing subnetwork followed by a
trainable terminal output map \cite{he2016resnet,vaswani2017attention}.
Zero-initializing that terminal map preserves the reference function
while retaining nonzero upstream features. An unconstrained output
projection alone does not guarantee realizability, however: the
upstream features must also span the sample-dependent variation required
by the current activation-gradient signal.

\subsection{Notation}
\label{sec:additional-notation}

\paragraph{Residual growth values.}
Define the local population and empirical residual growth values by
$\mathcal{V}_l:=\|\Pi_lm_l\|_{\mathcal{H}_l}$ and
$\mathcal{V}_{S,l}:=\|\Pi_{S,l}q_{S,l}\|_{S,l}$. Define their
depth-wide counterparts by
$\mathcal{V}_{\mathrm{depth}}:=\max_{l\in\mathcal{I}}\mathcal{V}_l$
and
$\mathcal{V}_{S,\mathrm{depth}}
:=\max_{l\in\mathcal{I}}\mathcal{V}_{S,l}$.
Their dependence on the fixed reference model and residual-growth
protocol is suppressed throughout.

\paragraph{Standard zero-output residual blocks.}
At candidate $l$, consider
$h_{l,U_l,V_l}(z):=V_l\psi_{l,U_l}(z)$, where
$\psi_{l,U_l}(z)\in\mathbb{R}^{r_l}$ and
$V_l\in\mathbb{R}^{N_l\times r_l}$. During the local insertion test,
$U_l=U_{0,l}$ is fixed and $V_l$ is the active parameter, initialized at
$V_l=0$. We identify
$\theta_l=\operatorname{vec}(V_l)$,
$p_l=N_lr_l$, and $\Theta_l=\mathbb{R}^{p_l}$.
Writing $\psi_l(z):=\psi_{l,U_{0,l}}(z)$, every matrix perturbation
$\Delta V_l$ satisfies
$J_l(z)\operatorname{vec}(\Delta V_l)=\Delta V_l\psi_l(z)$.
Allowing $U_l$ to vary as an additional active coordinate does not
enlarge the first-order tangent space at $V_l=0$, because
$D_{U_l}h_{l,U_l,V_l}(z)[\Delta U_l]
=V_lD_{U_l}\psi_{l,U_l}(z)[\Delta U_l]=0$ at the designated origin.
Thus fixing $U_l=U_{0,l}$ is a lossless first-order simplification.

Define the Frobenius inner product and norm by
$\langle B,C\rangle_F:=\operatorname{tr}(B^{\top}C)$ and
$\|B\|_F:=\langle B,B\rangle_F^{1/2}$. Define
$C_{\mathrm{pop}}^{(l)}
:=\mathbb{E}[q_l(z_l,y)\psi_l(z_l)^{\top}]$ and
$C_S^{(l)}:=M^{-1}\sum_{i=1}^{M}q_{l,i}\psi_l(z_{l,i})^{\top}$.
Let
$\Psi_S^{(l)}:=[\psi_l(z_{l,1}),\ldots,\psi_l(z_{l,M})]
\in\mathbb{R}^{r_l\times M}$, so that
$C_S^{(l)}=M^{-1}Q_S^{(l)}(\Psi_S^{(l)})^{\top}$.

Let
$\Pi_{\Psi,l}
:=(\Psi_S^{(l)})^{\top}
[\Psi_S^{(l)}(\Psi_S^{(l)})^{\top}]^{\dagger}\Psi_S^{(l)}$
be the orthogonal projector onto the row space of $\Psi_S^{(l)}$, where
${}^{\dagger}$ is the Moore--Penrose pseudoinverse. Under the
 tuple--matrix identification of $\mathcal H_{S,l}$, the matrix
representation of $\Pi_{S,l}q_{S,l}$ is
$Q_S^{(l)}\Pi_{\Psi,l}$. Consequently, finite-sample
residual-signal realizability is equivalent to
$Q_S^{(l)}=Q_S^{(l)}\Pi_{\Psi,l}$. Sample-wise tangent completeness is
equivalent to $\operatorname{rank}(\Psi_S^{(l)})=M$, in which case
$\Pi_{\Psi,l}=I_M$.

When one candidate is fixed, we suppress $l$ and write
$C_{\mathrm{pop}}$, $C_S$, $Q_S$, $\Psi_S$, and $\Pi_{\Psi}$.

\section{First-Order Residual Depth Saturation}
\label{sec:first-order-depth-saturation}

This section characterizes the exact boundary between residual depth
that retains first-order optimization value and residual depth that
cannot be activated from a function-preserving initialization. 

\subsection{Main Theorem: An Exact Saturation Boundary}
\label{sec:main-saturation-theorem}

Condition~\ref{cond:residual-nondegeneracy} was introduced in prior work
as a sufficient condition for a locally improving residual insertion
\cite{cheng2026qualitative}. We show that, under the fixed protocol, it
is also necessary for first-order improvement.

\begin{theorem}[Necessary and sufficient characterization of
first-order residual depth saturation]
\label{thm:first-order-saturation}
Suppose Assumptions~\ref{ass:first-order-regularity}
and~\ref{ass:first-order-optimizer} hold. At every candidate
$l\in\mathcal{I}$, the following statements are equivalent:
\begin{enumerate}
    \item Condition~\ref{cond:residual-nondegeneracy} holds at $l$.
    \item $g_{\mathrm{pop}}^{(l)}\neq0$.
    \item $\Pi_lm_l\neq0$.
    \item $\mathcal{V}_l>0$.
    \item There exists $\bar\eta_l>0$ such that, for every
    $\eta\in(0,\bar\eta_l)$,
    $\Phi_l(\eta d_l(g_{\mathrm{pop}}^{(l)}))<\Phi_l(0)$.
\end{enumerate}
Consequently,
$\mathcal{V}_{\mathrm{depth}}=0$ if and only if every candidate is
population first-order saturated, equivalently, if and only if the
reference model is first-order depth-saturated relative to the fixed
protocol.

For every fixed sample $S$ satisfying the differentiability conditions
in Assumption~\ref{ass:first-order-regularity}, the empirical
counterparts are also equivalent:
empirical Condition~\ref{cond:residual-nondegeneracy},
$g_S^{(l)}\neq0$, $\Pi_{S,l}q_{S,l}\neq0$,
$\mathcal{V}_{S,l}>0$, and the existence of
$\bar\eta_{S,l}>0$ such that
$\Phi_{S,l}(\eta d_l(g_S^{(l)}))<\Phi_{S,l}(0)$ for every
$\eta\in(0,\bar\eta_{S,l})$.
\end{theorem}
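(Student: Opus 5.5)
The plan is to route every statement through a single identity: the Fréchet derivative of $\Phi_l$ at the origin is represented by $g_{\mathrm{pop}}^{(l)}$. We then express $g_{\mathrm{pop}}^{(l)}$ through the adjoint of the tangent operator $A_l$.

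\emph{Step 1 (derivative formula).} Fix $u\in\mathbb{R}^{p_l}$. By Assumption~\ref{ass:first-order-regularity}, $\Phi_l$ is Fréchet differentiable at $0$, and its derivative is obtained by differentiating under the expectation. The sample-wise chain rule applies because $z\mapsto\ell(f_{\mathrm{top}}(z),y)$ is differentiable at $z_l$, $\theta\mapsto h_\theta(z_l)$ is differentiable at $0$, and $h_0(z_l)=0$, so the argument of $f_{\mathrm{top}}$ at the origin is exactly $z_l$. This gives
\[
D\Phi_l(0)[u]=\mathbb{E}\bigl[q_l(z_l,y)^{\top}J_l(z_l)u\bigr]=u^{\top}g_{\mathrm{pop}}^{(l)}.
\]
The expectation is finite by Cauchy--Schwarz and the second-moment bounds.

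\emph{Step 2 (adjoint and projection).} By the tower property, and since $J_l(z_l)$ is $z_l$-measurable,
\[
u^{\top}g_{\mathrm{pop}}^{(l)}=\mathbb{E}\bigl[m_l(z_l)^{\top}J_l(z_l)u\bigr]=\langle m_l,A_lu\rangle_{\mathcal{H}_l}.
\]
Here conditioning is justified by integrability of $q_l^{\top}J_lu$. Hence $g_{\mathrm{pop}}^{(l)}=A_l^{*}m_l$. We then use $\ker A_l^{*}=\operatorname{Range}(A_l)^{\perp}$, which holds because the range is finite-dimensional and therefore closed. Together with $\langle m_l,A_lu\rangle=\langle\Pi_lm_l,A_lu\rangle$, this shows $g_{\mathrm{pop}}^{(l)}=0$ if and only if $\Pi_lm_l=0$. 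This is (2)$\Leftrightarrow$(3), and (3)$\Leftrightarrow$(4) holds by the definition of $\mathcal{V}_l$.

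\emph{Step 3 ((1)$\Leftrightarrow$(2)$\Leftrightarrow$(5)).} For (1)$\Leftrightarrow$(2): if $g\neq0$, take $v=-g$, which gives $D\Phi_l(0)[v]=-\|g\|^2<0$. Conversely, if $g=0$ then every directional derivative vanishes, so Condition~\ref{cond:residual-nondegeneracy} fails.

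For (2)$\Rightarrow$(5): set $d:=d_l(g)$, so $g^{\top}d<0$ by Assumption~\ref{ass:first-order-optimizer}, and in particular $d\neq0$. Fréchet differentiability gives
\[
\Phi_l(\eta d)-\Phi_l(0)=\eta\,g^{\top}d+o(\eta\|d\|),
\]
which is negative for all small $\eta>0$, and feasibility holds because $0\in\operatorname{int}\Theta_l$.

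For (5)$\Rightarrow$(2), argue by contraposition: if $g=0$ then $d_l(0)=0$, so $\Phi_l(\eta d_l(g))=\Phi_l(0)$ for every $\eta$, and strict decrease is impossible. This step is where zero-state descent-compatibility is essential. It is also the only mildly delicate point: we must not mistake ``no strict decrease along the prescribed update'' for local minimality. The argument needs nothing beyond $d_l(0)=0$.

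\emph{Depth-wide and empirical statements.} Since $\mathcal{I}$ is finite and each $\mathcal{V}_l\ge0$, we have $\mathcal{V}_{\mathrm{depth}}=0$ if and only if every $\mathcal{V}_l=0$. By (4)$\Leftrightarrow$(5), this holds if and only if every candidate is population first-order saturated, which is the definition of depth saturation.

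The empirical version repeats Steps 1--3 with $\mu_l$ replaced by the empirical measure on $\{(z_{l,i},y_i)\}$. No conditional expectation is needed there, because $q_{S,l}$ itself plays the role of $m_l$. We get $g_S^{(l)}=A_{S,l}^{*}q_{S,l}$ in $\mathcal{H}_{S,l}$, and differentiability of the finite average follows from the sample-wise derivatives directly, with no interchange issue.

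The main obstacle is Step 2: the measure-theoretic justification that conditioning on $z_l$ preserves the inner product, i.e.\ the integrability needed for the tower property. We plan to handle it using the assumed second moments together with $|q^{\top}Ju|\le\|q\|\,\|J\|_\sigma\|u\|$.
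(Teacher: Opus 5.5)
Your proposal is correct and follows the paper's proof essentially step for step. The paper likewise derives $D\Phi_l(0)[u]=(g_{\mathrm{pop}}^{(l)})^{\top}u=\langle m_l,A_lu\rangle_{\mathcal H_l}$ via the chain rule, interchange and conditioning, reads off $A_l^{*}m_l=g_{\mathrm{pop}}^{(l)}$ to obtain (2)$\Leftrightarrow$(3)$\Leftrightarrow$(4), uses $v=-g$ for (1)$\Leftrightarrow$(2), uses $g^{\top}d_l(g)<0$ with Fr\'echet differentiability for (2)$\Rightarrow$(5), and uses $d_l(0)=0$ for the converse. One minor precision: $\ker A_l^{*}=\operatorname{Range}(A_l)^{\perp}$ holds for any bounded operator, so closedness of the finite-dimensional range is needed not there but to make the orthogonal projector $\Pi_l$ onto $\operatorname{Range}(A_l)$ itself well defined.
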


Items 1--4 characterize the existence of an arbitrary strict
first-order descent direction in the fixed residual parameter space.
Item 5 is the optimizer-realization statement: it concerns specifically
the update obtained by applying the fixed direction map $d_l$ to the
exact residual gradient. Its reverse implication uses the zero-state
property $d_l(0)=0$ and should not be read as a claim that a stationary
point cannot be left by stochastic or higher-order mechanisms.

The qualifier \emph{first-order} is essential. The theorem does not
exclude improvements obtained through higher-order curvature,
externally injected perturbations, stochastic escape, or a different
residual-growth protocol. The projected activation gradient and the
residual-parameter gradient share the same zero-versus-nonzero boundary,
but their norms generally differ. The growth value depends only on the
residual tangent subspace and is invariant under regular local
reparameterizations that preserve this subspace, whereas the Euclidean
parameter-gradient norm is coordinate-dependent.

A finite joint-insertion extension, under an additional joint
Fr\'echet-regularity condition, is stated in
 the supplementary material. A
separate local-smoothness corollary gives an explicit one-step decrease
for ordinary gradient descent without changing the main equivalence.

\subsection{Residual-Signal Realizability in Common Architectures}
\label{sec:common-tangent-completeness}

Theorem~\ref{thm:first-order-saturation} shows that the universal
criterion is the activation-gradient component contained in the fixed
residual tangent space. Residual-signal realizability permits this
projected criterion to be replaced by the raw activation-gradient
signal.

\begin{theorem}[Activation-gradient characterization under
residual-signal realizability]
\label{thm:activation-gradient-certificate}
Suppose Assumptions~\ref{ass:first-order-regularity}
and~\ref{ass:first-order-optimizer} hold at a fixed candidate $l$.
\begin{enumerate}
    \item If $m_l\in\mathcal{T}_h^{(l)}$, then
    $\mathcal{V}_l=\|m_l\|_{\mathcal{H}_l}$. Consequently,
    Condition~\ref{cond:residual-nondegeneracy} holds if and only if
    $m_l\neq0$, and the candidate is population first-order saturated
    if and only if $m_l=0$ $\mu_l$-almost everywhere.

    \item If $q_{S,l}\in\operatorname{Range}(A_{S,l})$, then
    $\mathcal{V}_{S,l}=\|q_{S,l}\|_{S,l}$. Consequently, empirical
    Condition~\ref{cond:residual-nondegeneracy} holds if and only if
    $q_{S,l}\neq0$, equivalently $Q_S^{(l)}\neq0$.

    \item Independently of Assumption~\ref{ass:tangent-completeness}, population
    Condition~\ref{cond:residual-nondegeneracy} holds if and only if
    $C_{\mathrm{pop}}^{(l)}\neq0$, and its empirical counterpart holds
    if and only if $C_S^{(l)}\neq0$.

    \item For this standard block, sample-wise tangent completeness is
    equivalent to $\operatorname{rank}(\Psi_S^{(l)})=M$. Under this
    condition, $\Pi_{\Psi,l}=I_M$, and therefore
    $C_S^{(l)}=0$ if and only if $Q_S^{(l)}=0$.
\end{enumerate}
If residual-signal realizability holds at every candidate, simultaneous
activation-gradient vanishing is equivalent to depth-wide first-order
saturation.
\end{theorem}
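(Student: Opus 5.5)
The plan is to reduce every item to Theorem~\ref{thm:first-order-saturation} and to elementary Hilbert-space facts about the tangent operators $A_l$ and $A_{S,l}$. The key identity I would establish first is that the residual-parameter gradient is the adjoint of the tangent operator applied to the conditional signal:
\[
g_{\mathrm{pop}}^{(l)}=\mathbb{E}[J_l(z_l)^\top q_l(z_l,y)]=\mathbb{E}[J_l(z_l)^\top m_l(z_l)]=A_l^{*}m_l .
\]
The middle equality follows from the tower property. It is justified by the square-integrability of $q_l$ and $J_l$ in Assumption~\ref{ass:first-order-regularity}, using Cauchy--Schwarz to make $J_l^\top q_l$ integrable. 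Likewise $g_S^{(l)}=A_{S,l}^{*}q_{S,l}$ with respect to $\langle\cdot,\cdot\rangle_{S,l}$. This identity already underlies the equivalence of items 2 and 3 in Theorem~\ref{thm:first-order-saturation}, since $\ker A_l^{*}=\operatorname{Range}(A_l)^{\perp}$, and I would take it as given there.

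For items 1 and 2: if $m_l\in\mathcal{T}_h^{(l)}$ then $\Pi_l m_l=m_l$, so $\mathcal{V}_l=\|m_l\|_{\mathcal{H}_l}$. By Theorem~\ref{thm:first-order-saturation}, Condition~\ref{cond:residual-nondegeneracy} is then equivalent to $\|m_l\|_{\mathcal{H}_l}>0$, that is, to $m_l\neq0$ in $L_2(\mu_l)$. Saturation is the negation, namely $m_l=0$ $\mu_l$-a.e. The empirical statement is identical with $\Pi_{S,l}$ and $q_{S,l}$. Here $\|q_{S,l}\|_{S,l}=0$ iff every $q_{l,i}=0$, iff $Q_S^{(l)}=0$.

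For item 3, I would specialize to the standard block $h=V\psi_l$ with $\theta=\operatorname{vec}(V)$. Using $J_l(z)\operatorname{vec}(\Delta V)=\Delta V\psi_l(z)$, compute
\[
(g_{\mathrm{pop}}^{(l)})^\top\operatorname{vec}(\Delta V)=\mathbb{E}[q_l^\top\Delta V\psi_l]=\langle C_{\mathrm{pop}}^{(l)},\Delta V\rangle_F .
\]
Hence $g_{\mathrm{pop}}^{(l)}=\operatorname{vec}(C_{\mathrm{pop}}^{(l)})$, and analogously $g_S^{(l)}=\operatorname{vec}(C_S^{(l)})$. The equivalence of items 1 and 2 in Theorem~\ref{thm:first-order-saturation} then gives item 3 with no realizability hypothesis.

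For item 4, I would use the matrix identification already recorded in the notation section. $\operatorname{Range}(A_{S,l})$ corresponds to matrices $\Delta V\Psi_S$, i.e.\ $N_l\times M$ matrices whose rows lie in the row space of $\Psi_S^{(l)}$. This set is all of $\mathbb{R}^{N_l\times M}$ iff that row space is $\mathbb{R}^M$, iff $\operatorname{rank}\Psi_S^{(l)}=M$. In that case $\Pi_{\Psi,l}=I_M$, so realizability holds automatically. Combining items 2 and 3 gives $C_S^{(l)}=0\iff Q_S^{(l)}=0$; directly, $C_S=M^{-1}Q_S\Psi_S^\top=0$ forces $Q_S\Pi_{\Psi}=0$, hence $Q_S=0$. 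The closing depth-wide statement follows by applying item 1 at every $l$ together with the consequence clause of Theorem~\ref{thm:first-order-saturation}: $\mathcal{V}_{\mathrm{depth}}=\max_l\|m_l\|=0$ iff all $m_l=0$.

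The main obstacle is measure-theoretic rather than algebraic. One must check that the conditional expectation identity $\mathbb{E}[J^\top q]=\mathbb{E}[J^\top m]$ holds under the stated integrability. One must also check that in item 3 the vectorization identity for $J_l$ holds $\mu_l$-a.e., so that the interchange in Assumption~\ref{ass:first-order-regularity} applies to the standard block. I would handle both by Cauchy--Schwarz and $J_l$ being $\sigma(z_l)$-measurable.
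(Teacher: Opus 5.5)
Your proposal is correct and follows essentially the same route as the paper's proof: $\Pi_l m_l=m_l$ plus Theorem~\ref{thm:first-order-saturation} for items 1--2, the identity $g_{\mathrm{pop}}^{(l)}=\operatorname{vec}(C_{\mathrm{pop}}^{(l)})$ and its empirical analogue for item 3, the description of $\operatorname{Range}(A_{S,l})$ as the matrices whose rows lie in the row space of $\Psi_S^{(l)}$ for item 4, and the depth-wide claim obtained candidatewise. One part of your anticipated obstacle is vacuous: the identity $J_l(z)\operatorname{vec}(\Delta V_l)=\Delta V_l\psi_l(z)$ holds for every $z$, not merely $\mu_l$-almost everywhere, because the standard block $V_l\psi_l(z)$ is linear in $V_l$.
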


Activation-gradient vanishing is always sufficient for local
first-order saturation, even without realizability. Realizability is
needed only for the reverse implication. More quantitatively, let
$\varepsilon_{\mathrm{real}}\in[0,1)$. If $m_l\neq0$ and
$\operatorname{dist}_{\mathcal H_l}(m_l,\mathcal{T}_h^{(l)})
\le\varepsilon_{\mathrm{real}}\|m_l\|_{\mathcal{H}_l}$, then
Pythagoras gives
$\mathcal{V}_l\ge\sqrt{1-\varepsilon_{\mathrm{real}}^2}
\|m_l\|_{\mathcal{H}_l}>0$.
The same statement holds on a fixed sample after replacing
$m_l$, $\mathcal T_h^{(l)}$, and $\|\cdot\|_{\mathcal H_l}$ by
$q_{S,l}$, $\operatorname{Range}(A_{S,l})$, and
$\|\cdot\|_{S,l}$, respectively.

Many ResNet and Transformer branches contain a feature-producing
subnetwork followed by a trainable output map. Zero-initializing the
terminal map preserves the reference function while keeping the
upstream features nonzero. Wide and diverse features can therefore make
signal-relative realizability plausible. 

For an arbitrary fixed residual block, the projected activation gradient
is the parameterization-invariant exact criterion; the
residual-parameter gradient has the same zero-versus-nonzero boundary.
For a standard zero-output block, the latter reduces to the
activation--feature cross-gradient. Under residual-signal realizability,
the raw activation gradient may be used directly.

\subsection{Finite-Sample Certification}
\label{sec:finite-sample-certification}

The preceding results characterize saturation through population
residual-parameter gradients, which are not directly observable. We
therefore consider an independent probe sample
$S_n^{\mathrm{probe}}
=\{(\widetilde{x}_i,\widetilde{y}_i)\}_{i=1}^{n}$
drawn after the reference model and the complete residual-growth
protocol have been fixed.

For each candidate $l\in\mathcal{I}$, let
$\xi_l(x,y):=J_l(z_l)^{\top}q_l(z_l,y)$,
$g_{\mathrm{pop}}^{(l)}:=\mathbb{E}[\xi_l]$, and $\widehat{g}_{n}^{(l)}
:=
\frac{1}{n}
\sum_{i=1}^{n}
\xi_l(\widetilde{x}_i,\widetilde{y}_i)$.
Define
$\Gamma:=\max_{l\in\mathcal{I}}
\lVert g_{\mathrm{pop}}^{(l)}\rVert_2$
and
$\widehat{\Gamma}_n
:=\max_{l\in\mathcal{I}}
\lVert\widehat{g}_n^{(l)}\rVert_2$.

\begin{proposition}[Finite-sample error of the saturation score]
\label{prop:finite-sample-saturation}

Suppose Assumption~\ref{ass:independent-probe} holds, and let
$\sigma_l^2
:=\mathbb{E}
\lVert\xi_l-g_{\mathrm{pop}}^{(l)}\rVert_2^2$.
Then, for every $t>0$,
\begin{equation}
\Pr\left(
    \left|
        \widehat{\Gamma}_n-\Gamma
    \right|
    \ge t
\right)
\le
\frac{1}{nt^2}
\sum_{l\in\mathcal{I}}\sigma_l^2.
\label{eq:finite-sample-score-bound}
\end{equation}

Consequently, fix $\delta\ge0$ and consider the separated hypotheses
$\Gamma\le\delta$ and $\Gamma\ge\delta+2t$.
The decision rule that declares saturation when
$\widehat{\Gamma}_n\le\delta+t$
has error probability at most the right-hand side of
\eqref{eq:finite-sample-score-bound}.
\end{proposition}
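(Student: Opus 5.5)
The plan is to reduce the deviation of the maximum to per-candidate deviations of the gradient estimates, and then apply Chebyshev's inequality to each candidate together with a union bound.

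\emph{Reduction to per-candidate deviations.} The maps $x\mapsto\max_l x_l$ and $v\mapsto\|v\|_2$ are $1$-Lipschitz, and the reverse triangle inequality gives
\[
\bigl|\widehat\Gamma_n-\Gamma\bigr|
\le\max_{l\in\mathcal I}\bigl|\|\widehat g_n^{(l)}\|_2-\|g_{\mathrm{pop}}^{(l)}\|_2\bigr|
\le\max_{l\in\mathcal I}\|\widehat g_n^{(l)}-g_{\mathrm{pop}}^{(l)}\|_2 .
\]
Hence the event $\{|\widehat\Gamma_n-\Gamma|\ge t\}$ is contained in $\bigcup_{l\in\mathcal I}\{\|\widehat g_n^{(l)}-g_{\mathrm{pop}}^{(l)}\|_2\ge t\}$. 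Since $\mathcal I$ is finite, a union bound applies.

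\emph{Vector Chebyshev bound for each candidate.} Conditional on the fixed model and protocol, Assumption~\ref{ass:independent-probe} makes the vectors $\xi_l(\widetilde x_i,\widetilde y_i)$ i.i.d. with mean $g_{\mathrm{pop}}^{(l)}$. Their second moments are finite by the assumption $\mathbb E\|J_l^\top q_l\|_2^2<\infty$, so $\sigma_l^2<\infty$. Independence kills the cross terms, so
\[
\mathbb E\|\widehat g_n^{(l)}-g_{\mathrm{pop}}^{(l)}\|_2^2=\sigma_l^2/n .
\]
Markov's inequality applied to the squared norm then gives $\Pr(\|\widehat g_n^{(l)}-g_{\mathrm{pop}}^{(l)}\|_2\ge t)\le\sigma_l^2/(nt^2)$. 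Summing over $l$ yields \eqref{eq:finite-sample-score-bound}.

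\emph{Decision rule.} Let $\widehat\Gamma_n\le\delta+t$ be the rule that declares saturation.
\begin{itemize}
\item Under $\Gamma\le\delta$, an error (not declaring saturation) means $\widehat\Gamma_n>\delta+t\ge\Gamma+t$, which forces $|\widehat\Gamma_n-\Gamma|>t$.
\item Under $\Gamma\ge\delta+2t$, an error (declaring saturation) means $\widehat\Gamma_n\le\delta+t\le\Gamma-t$, which forces $|\widehat\Gamma_n-\Gamma|\ge t$.
\end{itemize}
In both cases the error event lies inside the deviation event, so its probability is bounded by the same right-hand side.

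The only step needing care is measurability and the conditioning. The probabilities must be taken conditional on the trained model, and the probe sample must be independent of it, so that the $\xi_l$ are genuinely i.i.d. with mean $g_{\mathrm{pop}}^{(l)}$. This is exactly what Assumption~\ref{ass:independent-probe} provides. The remaining steps are routine.
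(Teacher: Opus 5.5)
Your proof is correct and follows the paper's argument. It computes $\mathbb E\|\widehat g_n^{(l)}-g_{\mathrm{pop}}^{(l)}\|_2^2=\sigma_l^2/n$, applies Markov's inequality to the squared norm with a union bound over the finite $\mathcal I$, and places both error events of the separated test inside the deviation event. You also make explicit the reduction $|\widehat\Gamma_n-\Gamma|\le\max_l\|\widehat g_n^{(l)}-g_{\mathrm{pop}}^{(l)}\|_2$, which the paper uses only implicitly when it moves from its max-deviation bound to the stated bound on $|\widehat\Gamma_n-\Gamma|$.
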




\section{Experiments}
\label{sec:experiments}
We conduct two complementary experiments to evaluate the proposed depth-saturation criterion. First, we examine how the maximum per-example activation-gradient norm changes with depth across ResNets trained on CIFAR-10, CIFAR-100, and ImageNet-100, GPT-2-style models trained on FineWeb-Edu, and official Pythia checkpoints continued-pretrained on the same corpus.   Second, we compare function-preserving growth with training the same final architectures from random initialization to determine whether zero-output insertion impairs converged solution quality. The training and model settings are provided in the Appendix. All models are well-trained.

\subsection{Activation-Gradient Norms Across Model Depth}
\label{sec:gradient-depth-scaling}

\begin{figure*}[t]
    \centering

    \begin{minipage}[t]{0.32\textwidth}
        \centering
        \includegraphics[width=\linewidth]{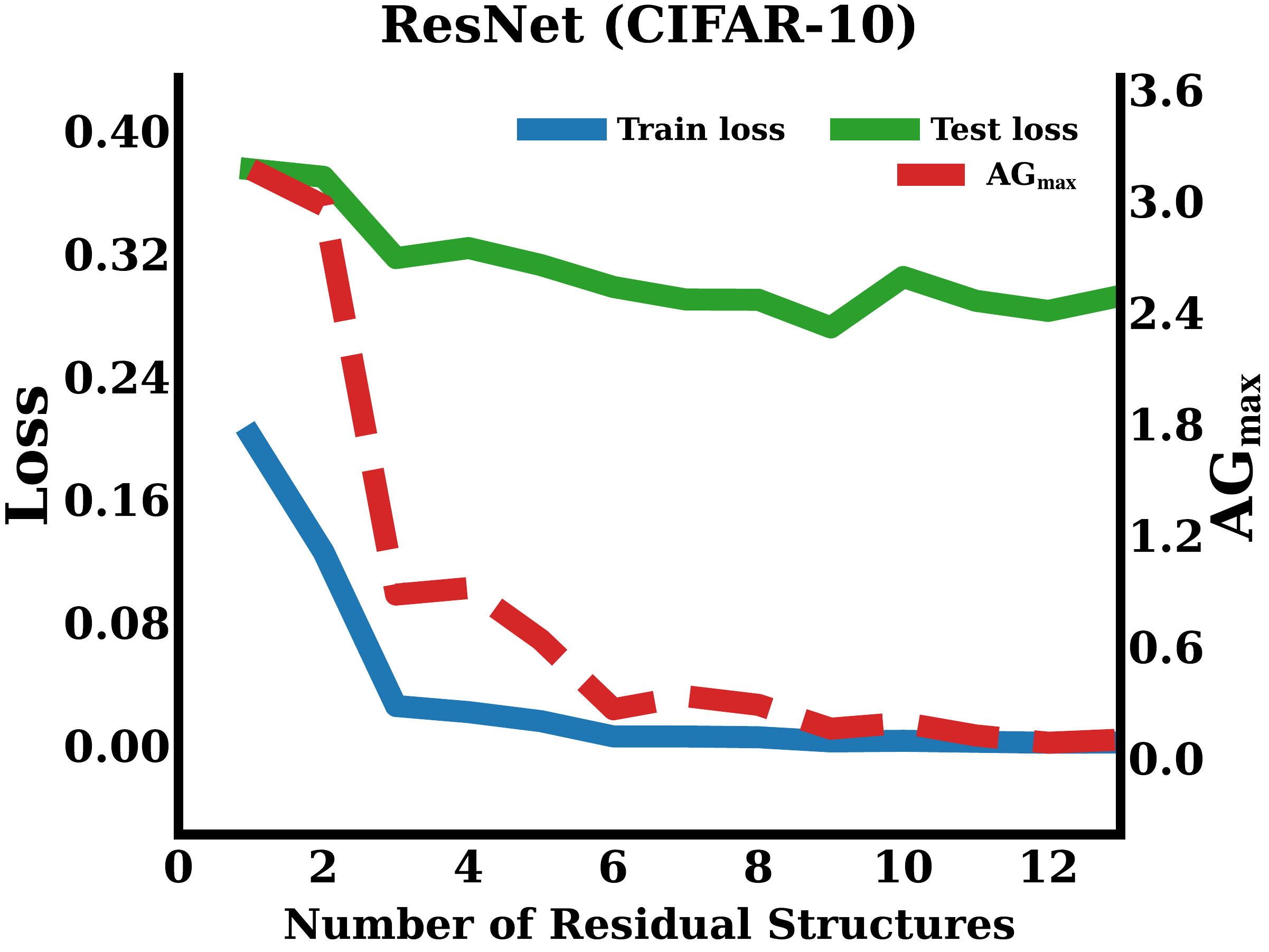}
        \par\smallskip
    \end{minipage}
    \hfill
    \begin{minipage}[t]{0.32\textwidth}
        \centering
        \includegraphics[width=\linewidth]{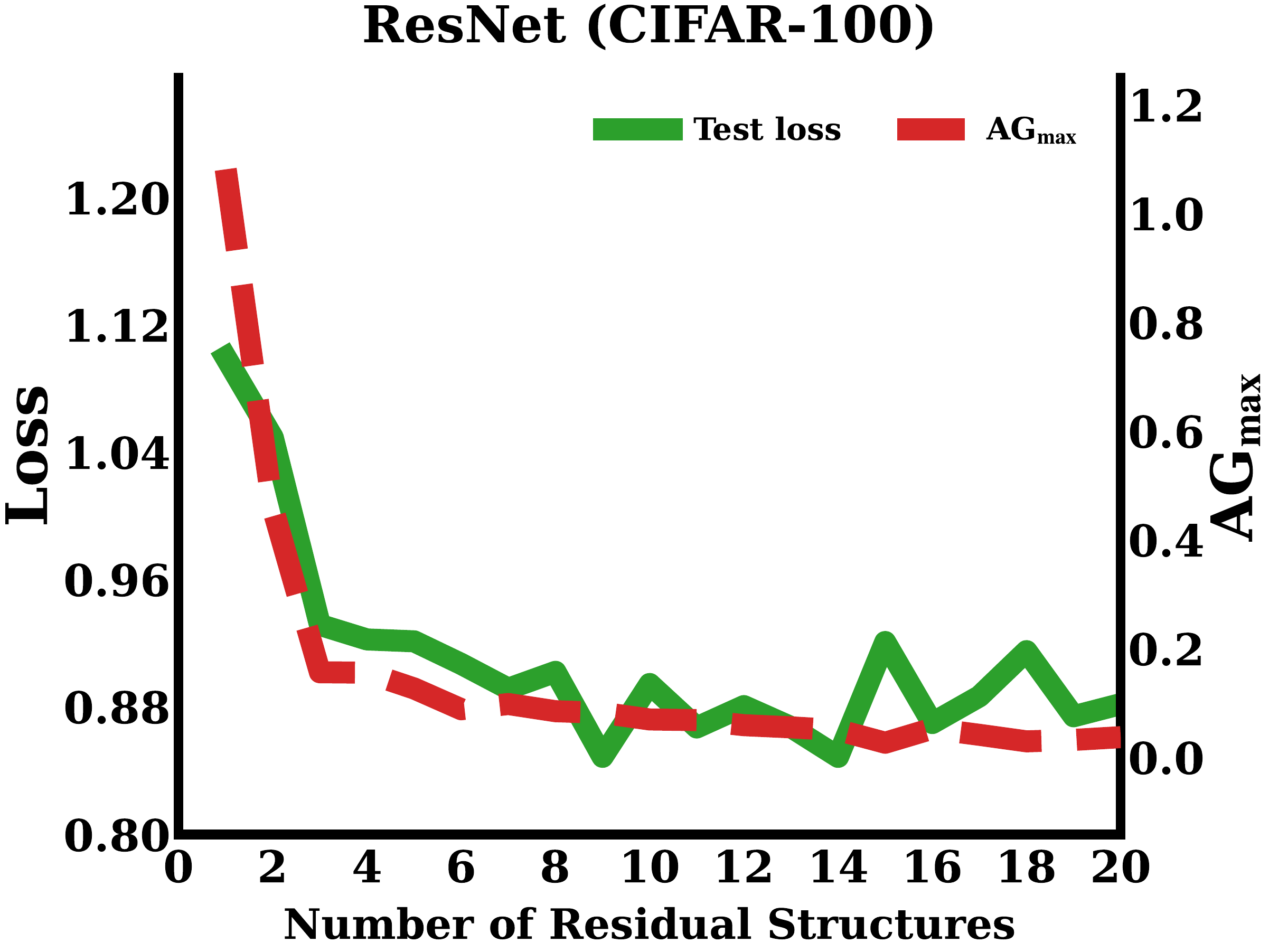}
        \par\smallskip
    \end{minipage}
    \hfill
    \begin{minipage}[t]{0.32\textwidth}
        \centering
        \includegraphics[width=\linewidth]{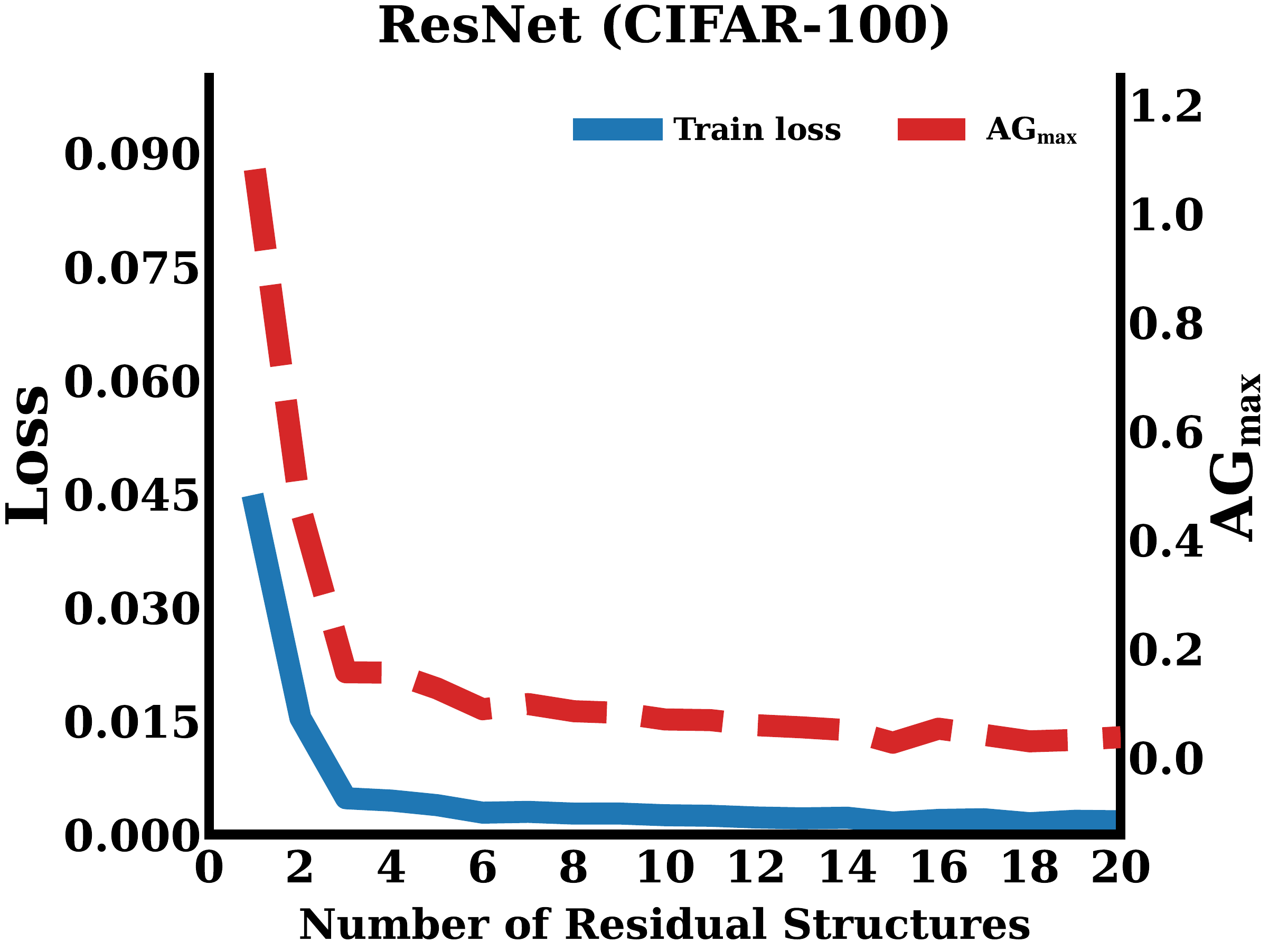}
        \par\smallskip
    \end{minipage}

    \vspace{0.8em}

    \begin{minipage}[t]{0.32\textwidth}
        \centering
        \includegraphics[width=\linewidth]{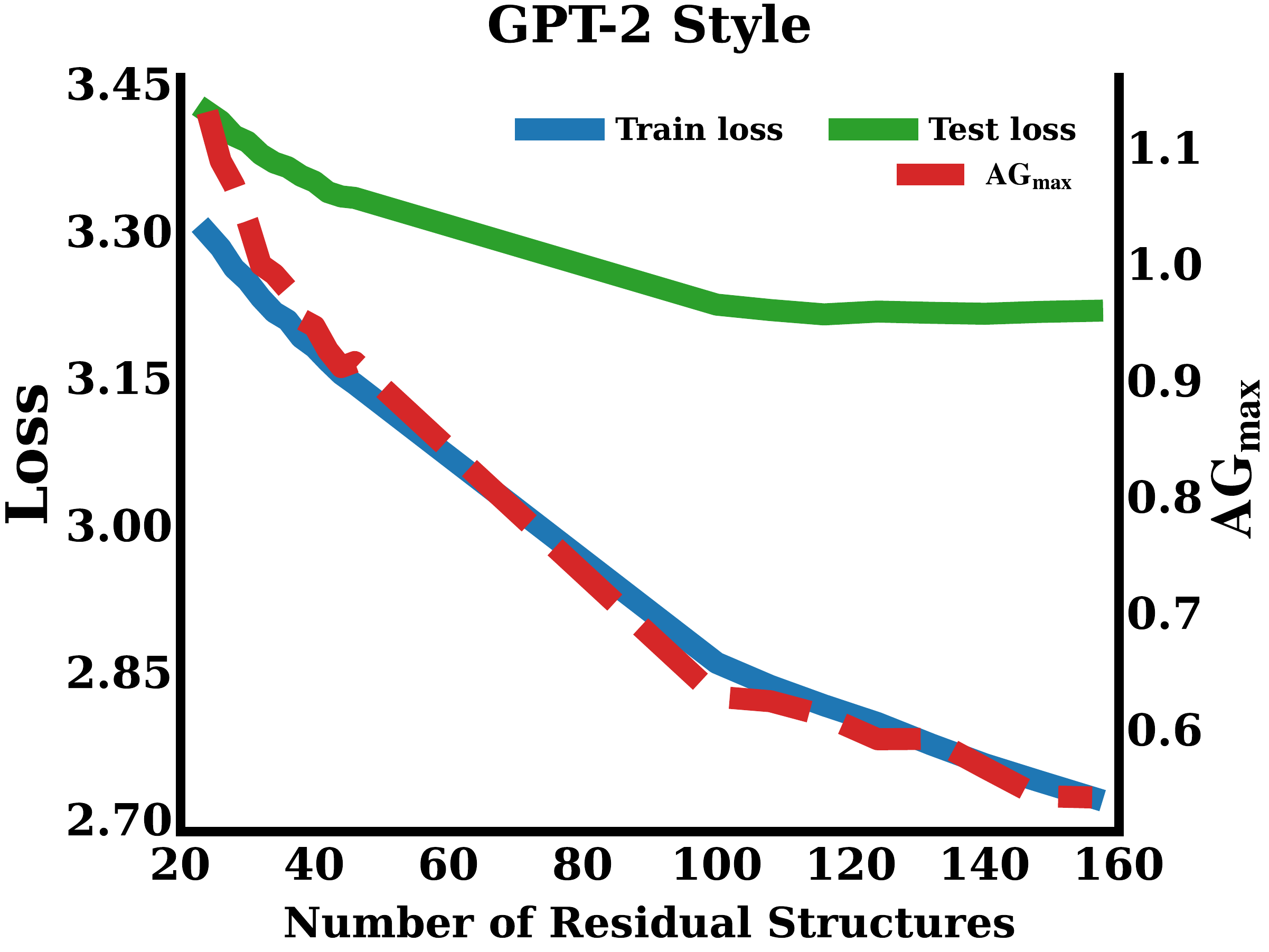}
        \par\smallskip
    \end{minipage}
    \hfill
    \begin{minipage}[t]{0.32\textwidth}
        \centering
        \includegraphics[width=\linewidth]{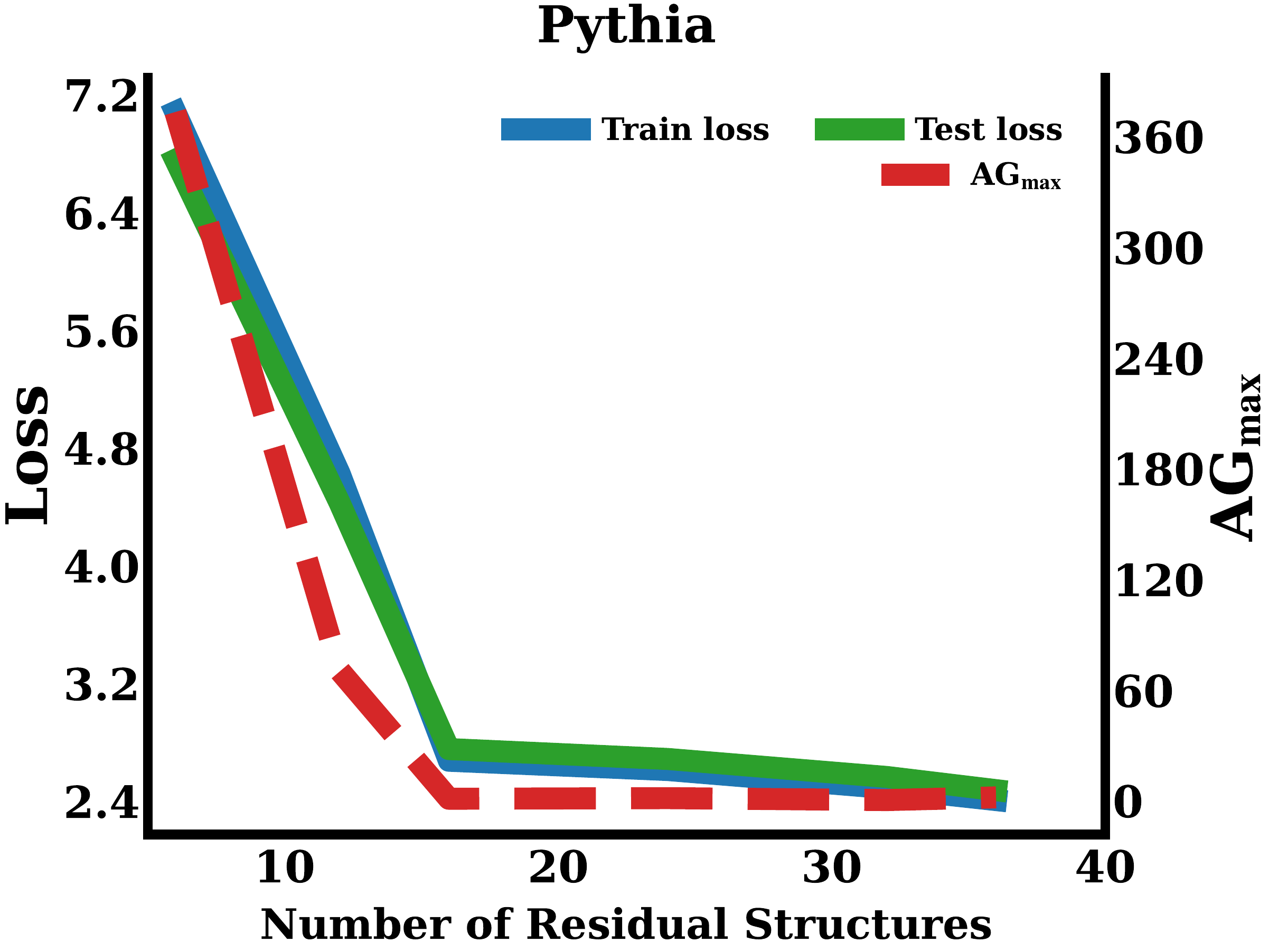}
        \par\smallskip
    \end{minipage}
    \hfill
    \begin{minipage}[t]{0.32\textwidth}
        \centering
        \includegraphics[width=\linewidth]{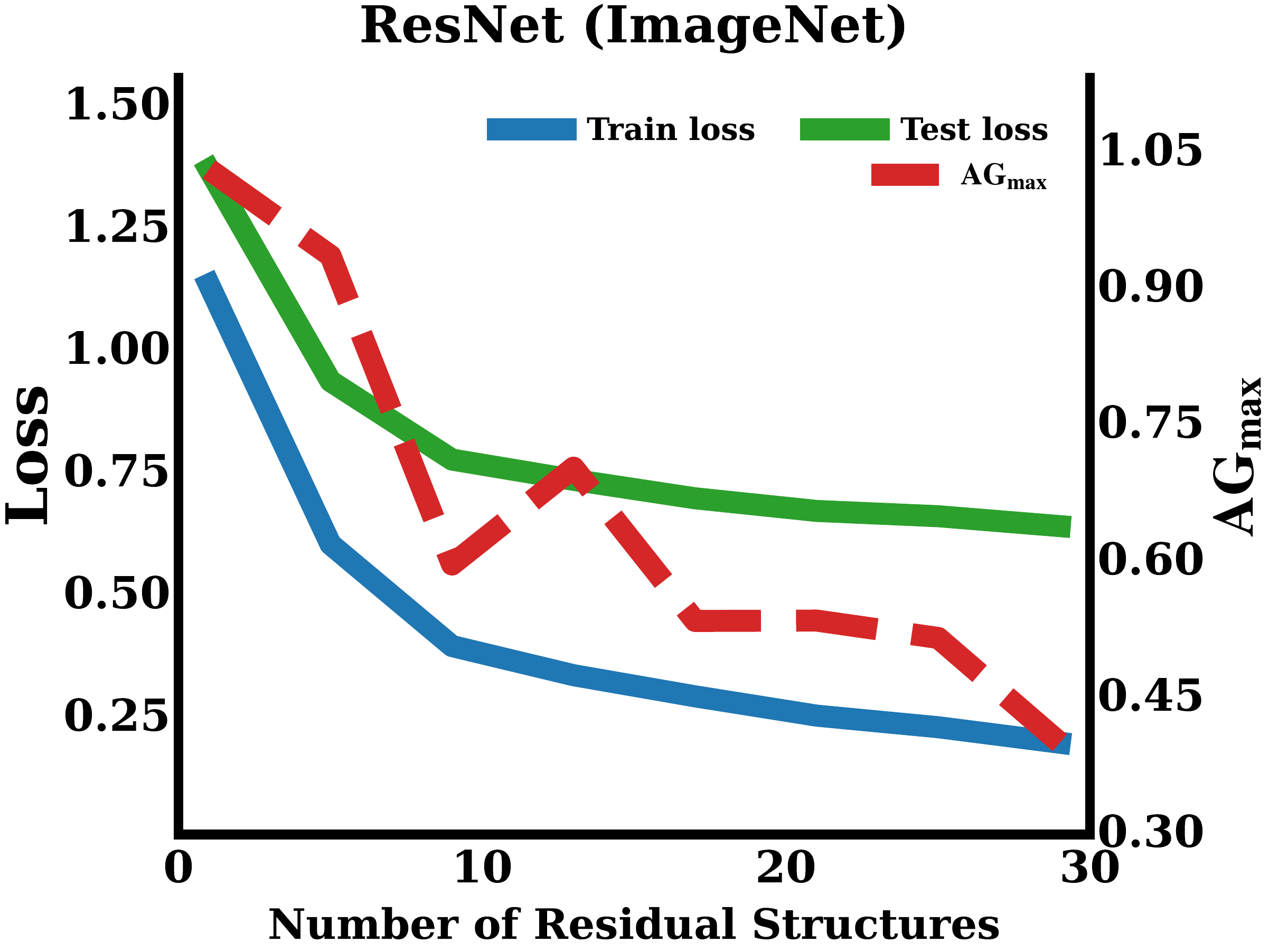}
        \par\smallskip
    \end{minipage}

    \caption{The Loss and Maximum Sample-Wise Activation-Gradient Norm Across Model Depth.}
    \label{fig:six-panel-results}
\end{figure*}

We first examine how the fixed-sample activation-gradient signal changes
as model depth increases. This analysis covers ResNets trained on
CIFAR-10, CIFAR-100, and ImageNet-100, official Pythia checkpoints
continued-pretrained on FineWeb-Edu, and GPT-2-style models trained from
scratch on FineWeb-Edu.  

For each admissible candidate \(l\in\mathcal I\), i.e., at the input of an existing residual structure in this experiment,  define $\mathrm{AG}_{l}
:=
\|q_{S,l}\|_{S,l}
=
\left(
\frac{1}{M}\sum_{i=1}^{M}\|q_{l,i}\|_{2}^{2}
\right)^{1/2},
\mathrm{AG}_{\max}
:=
\max_{l\in\mathcal I}\mathrm{AG}_{l}$.

Under sample-wise residual-signal realizability,
\(q_{S,l}\in\operatorname{Range}(A_{S,l})\), the empirical
activation-gradient signal lies entirely in the admissible residual
tangent space. Hence, $\Pi_{S,l}q_{S,l}=q_{S,l},
\qquad
V_{S,l}
=
\|\Pi_{S,l}q_{S,l}\|_{S,l}
=
\mathrm{AG}_{l}$. If realizability holds at every candidate, then $
V_{S,\mathrm{depth}}
=
\max_{l\in\mathcal I}V_{S,l}
=
\mathrm{AG}_{\max}$. Thus, under sample-wise realizability, \(\mathrm{AG}_{\max}\) exactly
equals the empirical first-order residual-growth value.    Without
verified realizability, we only have $V_{S,\mathrm{depth}}\leq \mathrm{AG}_{\max}$,
so \(\mathrm{AG}_{\max}\) is the upper bound of $V_{S,\mathrm{depth}}$. However, it is enough to use this upper bound to gain the value of $V_{S,\mathrm{depth}}$ because for most of cases, $AG_{\max}$ is close to zero.

\subsubsection{Analysis}
Figure~\ref{fig:six-panel-results} shows a broadly consistent transition 
from a high-signal, depth-beneficial regime to a stable low-signal 
regime. For clearly show resluts, CIFAR 100 results are divideds into two subfigures. For the controlled ResNet and GPT-2-style sweeps, 
$AG_{\max}$ decreases rapidly together with task loss at shallow and 
intermediate depths, and then changes only modestly once the loss 
improvements begin to diminish. Pythia exhibits a similar but sharper 
transition. ImageNet-100 is less monotone at intermediate depths, but 
the overall envelope of $AG_{\max}$ still decreases and reaches its 
minimum at the largest evaluated depth.

The relevant observation is therefore not strict monotonicity at every 
checkpoint, but the emergence of a persistent low-signal plateau. 
Because $AG_{\max}$ is the maximum over all admissible insertion 
locations, a small 
$AG_{\max}$ conservatively implies that the empirical projected 
residual-growth value is small at every candidate. The alignment 
between this regime and diminishing task-loss improvements is 
consistent with the proposed first-order saturation criterion. 
However, a large raw score does not guarantee a realizable residual 
direction, and the training-sample diagnostic need not exactly track 
test loss performance. These results therefore motivate using a stable 
low-signal plateau, rather than a universal threshold or an isolated 
checkpoint, as the practical indicator of residual depth saturation.

As shown in Figure \ref{fig:six-panel-results}, the proposed depth-saturation indicator exhibits a consistent relationship with the marginal benefit of increasing model depth across CIFAR-10, CIFAR-100, and Pythia. At relatively shallow depths, the indicator decreases rapidly as additional residual structures improve model performance. However, once the indicator approaches the low-signal regime—approximately $5*10^{-3}$, further increases in depth yield little or no additional performance improvement. This transition occurs at approximately 16 residual structures for the CIFAR-10 ResNet ($AG_{max}=0.00381$), 11 residual structures for the CIFAR-100 ResNet ($AG_{max}=0.00320$), and 33 residual structures for Pythia ($AG_{max}=0.0008$). The close alignment between the emergence of a near-zero indicator and the disappearance of measurable performance gains supports the use of the proposed metric as a practical diagnostic of residual depth sufficiency. In particular, a persistently small value suggests that little first-order optimization value remains available from adding further residual structures.

 \paragraph{Extended-depth evaluation.}
We conducted independent large-depth experiments (Training hundreds of these models is prohibitively expensive.). For ImageNet-100, ResNet-256/257/258 yielded $AG_{max}$ scores of $0.0020$/$0.0018$/$0.0011$, with training losses decreasing to $0.017$/$0.015$/$0.011$ and test losses fluctuating at $0.66$/$0.72$/$0.59$. Similarly, GPT-2 configurations (700/701/702 residual structures) showed $AG_{max}$ scores of $0.0033$/$0.0029$/$0.0022$, training losses decreasing to $2.67$/$2.62$/$2.55$, and test losses fluctuating at $2.22$/$2.83$/$2.07$. In both settings, $AG_{max}$ scores remained below approximately $5\times10^{-3}$.  The change of training and test loss is so small.  Given the probabilistic nature of test loss performance and its sensitivity to finite-sample and optimization variability, depth saturation need not manifest as monotonically worsening test loss. Crucially, test performance merely fluctuated within a range without systematic gains from increased depth. This confirms that once $AG_{max}$ scores reach a persistent near-zero level, additional residual structures yield minimal marginal improvements in test loss performance, even if isolated deeper checkpoints occasionally achieve better point estimates.

\subsection{Function-Preserving Growth versus From-Scratch Training}
\label{sec:growth-vs-scratch}

For the same four CIFAR-10 ResNet configurations, we compare the
function-preserving grown model with a model of the same final
architecture trained from random initialization. Both models use the
same optimization recipe and are trained to their respective convergence
criteria. This comparison tests whether the disappearance of post-growth
improvement could be explained by an optimization disadvantage caused
by the function-preserving initialization. It is a convergence-quality
comparison rather than a compute-matched estimate of training
efficiency. The model in this section is well-trained and fully converged with enough computational resources. The Local Effect of Activation-Gradient-Matched Insertions experiments is shown in Appendix.

\paragraph{Comparison protocol.}
We evaluate four ResNet configurations on CIFAR-10. For each
configuration, the \emph{Growth} model is obtained by inserting one
globally zero-output residual block into a converged shallower network.
The insertion preserves the reference function exactly before
optimization. After insertion, the parameters of the deeper model are
trained using the classification objective until the prescribed
convergence criterion is reached. The corresponding \emph{Scratch}
model has exactly the same final architecture and parameter count, but
all of its parameters are initialized randomly and trained from scratch.

The two procedures use the same dataset, data preprocessing, model
architecture, optimizer family, regularization, and convergence
criterion. This experiment compares the quality of the solutions
reachable from the two initializations. It is not a compute-matched
comparison of training efficiency, because the Growth model inherits
the optimization already invested in its shallower reference model.

\begin{table}[t]
    \centering
    \caption{Converged losses for training from scratch and
    function-preserving block growth on CIFAR-10. Each row compares the
    same final ResNet architecture under the two initialization
    procedures.}
    \label{tab:resnet_growth}
    \small
    \setlength{\tabcolsep}{3.5pt}
    \begin{tabular}{@{}lcccc@{}}
        \toprule
        & \multicolumn{2}{c}{Train Loss}
        & \multicolumn{2}{c}{Test Loss} \\
        \cmidrule(lr){2-3}
        \cmidrule(lr){4-5}
        Model & Scratch & Growth & Scratch & Growth \\
        \midrule
        ResNet-10 & 0.0088 & 0.0070 & 0.3045 & 0.2589 \\
        ResNet-11 & 0.0082 & 0.0068 & 0.2874 & 0.2626 \\
        ResNet-12 & 0.0059 & 0.0069 & 0.2822 & 0.2580 \\
        ResNet-13 & 0.0059 & 0.0057 & 0.2948 & 0.2510 \\
        \bottomrule
    \end{tabular}
\end{table}

\subsubsection{Analysis}
As shown in Table~\ref{tab:resnet_growth}, function-preserving growth
reaches training losses that are closely comparable to those obtained by
training the same final architectures from scratch. In three of the four
configurations---ResNet-10, ResNet-11, and ResNet-13---Growth achieves a
lower training loss, whereas ResNet-12 shows only a small increase relative
to Scratch. Averaged across all four architectures, the training loss is
$0.0066$ for Growth and $0.0072$ for Scratch. These differences are small
in absolute magnitude and do not indicate a systematic optimization
disadvantage caused by the zero-output initialization. In particular,
preserving the original network function at insertion does not appear to
trap the expanded model near the inherited solution or prevent the enlarged
architecture from reaching a competitive minimum after subsequent task-loss
optimization. Within the evaluated CIFAR-10 ResNet configurations,
function-preserving insertion therefore provides a viable initialization
for continued training.

The test loss results are also consistently favorable to Growth. Growth
obtains a lower test loss for all four final architectures, with absolute
reductions ranging from $0.0242$ for ResNet-12 to $0.0456$ for ResNet-10.
The average test loss decreases from $0.2922$ to $0.2576$, corresponding to
a relative reduction of approximately $11.8\%$. This improvement is not
accompanied by a systematic increase in training loss, suggesting that it
cannot be explained simply by weaker fitting of the training data. The
train--test gap is generally smaller under Growth, especially for
ResNet-12, where Growth has a slightly higher training loss but a lower
test loss. This pattern is consistent with the inherited shallower
representation changing the optimization trajectory of the expanded model
and potentially inducing different implicit regularization.

\section{Conclusion}
We developed a first-order framework for evaluating whether additional
residual depth remains locally useful under a fixed function-preserving
growth protocol. We showed that a residual insertion admits a strict
first-order improvement if and only if the conditional activation gradient
has a nonzero projection onto the corresponding residual tangent space.
For standard zero-output residual blocks, this condition reduces to an
activation--feature cross-gradient criterion.   In the evaluated
CIFAR-10 configurations, function-preserving growth reaches converged
training losses comparable to those obtained from scratch. Together, these
results establish tangent-space projection as an exact local criterion for
the remaining first-order optimization value of residual depth.

\bibliography{aaai2027}

\setcounter{secnumdepth}{2}
\appendix

\section{Complete Notation}
\label{app:complete-notation}

Tables~\ref{tab:notation-model}--\ref{tab:notation-supplementary}
collect the symbols used in the main text and supplementary proofs.
Dependence on the fixed reference model and residual-growth protocol is
suppressed when no ambiguity arises.

\begin{table*}[t]
\centering
\footnotesize
\begin{tabularx}{\textwidth}{@{}lX@{}}
\toprule
Symbol & Meaning \\
\midrule
$\mathcal X,\mathcal Y,\mathcal D$ & Input space, label space, and data distribution on $\mathcal X\times\mathcal Y$. \\
$d_{\mathrm{out}}$ & Predictor output dimension. \\
$S,M$ & A fixed sample $S=\{(x_i,y_i)\}_{i=1}^{M}$ and its cardinality. \\
$\ell,R,\mathcal L_S$ & Loss, population risk, and fixed-sample empirical risk. \\
$f_{\mathrm{old}}^{*}$ & Trained reference model. \\
$\mathcal I,l$ & Fixed finite candidate set and one candidate. \\
$f_{\mathrm{bot}}^{(l)},f_{\mathrm{top}}^{(l)}$ & Reference-model components below and above candidate $l$. \\
$z_l,N_l,\mu_l$ & Hidden state, its dimension, and its induced distribution. \\
$\mathcal F_{\mathrm{res}}^{(l)}$ & Parameterized residual family at candidate $l$. \\
$\Theta_l,p_l,\theta_l$ & Active residual parameter domain, dimension, and local coordinate. \\
$h_{l,\theta_l},f_{l,\theta_l}$ & Residual function and corresponding expanded model. \\
$\Phi_l,\Phi_{S,l}$ & Population and empirical objectives as functions of $\theta_l$. \\
$v_{\mathrm{pop}}^{(l)},v_S^{(l)}$ & Population and empirical directions in the residual non-degeneracy condition stated in the main paper. \\
$d_l,\eta$ & Fixed first-order direction map and free positive step-size scalar. \\
$\bar\eta_l,\bar\eta_{S,l}$ & Population and empirical local descent thresholds. \\
$\epsilon_{\mathrm{Adam}}$ & Positive numerical-stability constant in the first Adam/AdamW direction. \\
$B_l,\rho_l$ & Integrable local Lipschitz envelope and neighborhood radius in the dominated-differentiation remark in the main paper. \\
$\|\cdot\|_{\sigma}$ & Matrix operator norm. \\
\bottomrule
\end{tabularx}
\caption{Model, data, regularity, and optimization notation.}
\label{tab:notation-model}
\end{table*}

\begin{table*}[t]
\centering
\footnotesize
\begin{tabularx}{\textwidth}{@{}lX@{}}
\toprule
Symbol & Meaning \\
\midrule
$q_l,m_l$ & Sample activation gradient and conditional population activation-gradient signal. \\
$J_l$ & Jacobian of $h_{l,\theta_l}$ with respect to $\theta_l$, evaluated at $\theta_l=0$. \\
$g_{\mathrm{pop}}^{(l)},g_S^{(l)}$ & Population and empirical residual-parameter gradients. \\
$D\Phi_l(0)[u],D\Phi_{S,l}(0)[u]$ & Population and empirical Fr\'echet derivatives applied to direction $u$. \\
$\mathcal H_l,A_l,\mathcal T_h^{(l)},\Pi_l$ & Population hidden-state Hilbert space, tangent operator, tangent range, and orthogonal projector. \\
$\mathcal H_{S,l},q_{S,l},Q_S^{(l)}$ & Empirical Hilbert space and tuple/matrix forms of the sample activation-gradient signal. \\
$A_{S,l},\Pi_{S,l}$ & Stacked tangent operator and projector onto its range. \\
$\mathcal V_l,\mathcal V_{S,l}$ & Population and empirical local residual growth values. \\
$\mathcal V_{\mathrm{depth}},\mathcal V_{S,\mathrm{depth}}$ & Maximal local growth values over the fixed candidate set. \\
$A_l^{*},G_l,\widehat G_{S,l}$ & Tangent adjoint, population tangent Gram operator, and empirical tangent Gram matrix. \\
$\operatorname{dist}_{\mathcal H_l}(a,\mathcal T)$ & Distance from $a$ to $\mathcal T$ in the $\mathcal H_l$ norm; $\operatorname{dist}_{S,l}$ denotes the empirical analogue. \\
$\dim(\cdot)$ & Dimension of a finite-dimensional vector space or vectorized tensor. \\
$\varepsilon_{\mathrm{real}}$ & Relative residual-realizability error in $[0,1)$. \\
$L_l,L_{S,l}$ & Optional population and empirical local smoothness constants in Corollary~\ref{cor:local-smoothness-descent}. \\
\bottomrule
\end{tabularx}
\caption{Activation-gradient, tangent-space, and growth-value notation.}
\label{tab:notation-tangent}
\end{table*}

\begin{table*}[t]
\centering
\footnotesize
\begin{tabularx}{\textwidth}{@{}lX@{}}
\toprule
Symbol & Meaning \\
\midrule
$U_l,U_{0,l},V_l,\psi_{l,U_l},\psi_l,r_l$ & Feature parameters, their designated value, active output projection, feature map, fixed feature map, and feature dimension in the standard block. \\
$C_{\mathrm{pop}}^{(l)},C_S^{(l)}$ & Population and empirical activation--feature cross-gradients. \\
$\Psi_S^{(l)},\Pi_{\Psi,l}$ & Residual-feature matrix and projector onto its row space. \\
$\langle\cdot,\cdot\rangle_F,\|\cdot\|_F$ & Frobenius inner product and norm. \\
${}^{\dagger},\operatorname{vec},I_M$ & Moore--Penrose pseudoinverse, vectorization, and $M\times M$ identity. \\
$K,(l_1,\ldots,l_K),F_j$ & Number of jointly inserted blocks, their distinct locations listed from upstream to downstream, and original network segments between them. \\
$T_{l_j,\theta_{l_j}},\iota_j$ & Full residual transformation $w\mapsto w+h_{l_j,\theta_{l_j}}(w)$ and canonical injection into joint block $j$. \\
$\mathcal K,\theta_{\mathcal K},u_{\mathcal K}$ & Ordered joint candidate tuple, product parameter, and joint direction. \\
$f_{\mathcal K,\theta_{\mathcal K}},\Phi_{\mathcal K}$ & Jointly expanded model and its population objective. \\
$S_n^{\mathrm{probe}},\xi_l,\widehat g_n^{(l)}$ & Independent probe sample, one-sample residual-gradient contribution, and probe gradient estimate. \\
$\Gamma,\widehat\Gamma_n,\sigma_l^2$ & Population saturation score, empirical score, and residual-gradient variance. \\
$\tau_n,\widehat{\mathrm{Sat}}_n$ & Detection threshold and empirical saturation decision. \\
$\delta,t,H_0,H_1$ & Practical near-saturation tolerance, separation radius, and hypotheses $H_0:\Gamma\le\delta$ and $H_1:\Gamma\ge\delta+2t$. \\
$\mathbf 1\{\cdot\}$ & Indicator of an event. \\
$\mathsf{AG}_l,\mathsf{AG}_{\max}$ & Experimental scores $\mathsf{AG}_l:=\|q_{S,l}\|_{S,l}=M^{-1/2}\|Q_S^{(l)}\|_F$ and $\mathsf{AG}_{\max}:=\max_{l\in\mathcal I}\mathsf{AG}_l$. \\
$\widetilde{\mathsf{AG}}_l$ & Dimension-normalized visualization score $\mathsf{AG}_l/\sqrt{N_l}$; it is not used to rank candidates with unequal hidden dimensions. \\
\bottomrule
\end{tabularx}
\caption{Standard-block, joint-insertion, supplementary, and experimental notation.}
\label{tab:notation-supplementary}
\end{table*}

\begin{remark}[Normalization of the experimental activation-gradient score]
\label{rem:ag-normalization}
The score $\mathsf{AG}_l$ uses the same sample normalization as
$\|q_{S,l}\|_{S,l}$. Under finite-sample residual-signal realizability,
$\mathsf{AG}_l=\mathcal V_{S,l}$. A coordinate-normalized variant
$\widetilde{\mathsf{AG}}_l=\mathsf{AG}_l/\sqrt{N_l}$ has the same
zero-versus-nonzero boundary but can change candidate rankings when
hidden dimensions differ. It should therefore be treated as a
descriptive visualization scale rather than as the exact empirical
growth value.
\end{remark}
\section{Detailed Experimental Settings}
\label{app:experimental-settings}

This section gives the architecture, data, optimization, checkpoint, and
measurement details for the depth-scaling experiments. Unless stated
otherwise, depth is the controlled architectural variable: width,
classification or language-modeling heads, and the remaining model
hyperparameters are held fixed within each sweep.

\subsection{Depth Variables and Gradient Measurement}
\label{app:exp-depth-score}

\paragraph{ResNet depth.}
The ResNet experiments start from a stagewise ResNet-8 backbone containing
one BasicBlock in each of three stages. We denote by $N$ the number of
residual blocks added to this fixed backbone, so a model contains $3+N$
residual blocks in total. Added blocks are assigned cyclically to
\texttt{layer1}, \texttt{layer2}, and \texttt{layer3}. This rule changes
depth without changing the stage widths or classification head and avoids
placing all added capacity at the end of the network.

\paragraph{GPT-2 depth.}
For GPT-2-style models, $L$ denotes the total number of decoder blocks. The
baseline has $L=12$, and $L=13$ therefore denotes the baseline architecture
with one additional decoder block. Hidden size, feed-forward size, number
of attention heads, positional encoding, normalization, vocabulary, and
the per-update optimization recipe are fixed as $L$ varies; the documented
run durations are specified below.

\paragraph{Recorded activation-gradient statistic.}
For a residual block, decoder block, classifier, or language-modeling head,
let $H_{l,b}$ be the complete input-activation tensor at location $l$ for
analysis batch $b$, and let $\mathcal L_b$ be that batch's loss. Each
analysis batch requires one forward and one backward pass. The run-time
diagnostic stored by the experiment scripts is
\begin{equation}
    \widehat{\mathsf{AG}}_l
    :=
    \frac{1}{B}\sum_{b=1}^{B}
    \left\|
        \nabla_{H_{l,b}}\mathcal L_b
    \right\|_2,
    \label{eq:app-batch-activation-gradient}
\end{equation}
where the norm is taken once over the full, flattened activation-gradient
tensor and the resulting scalar is averaged over $B$ analysis batches.
Inputs to the final fully connected classifier and LM head are recorded as
diagnostic endpoints; the residual-growth candidate set itself contains
the residual- or decoder-block boundaries.

Equation~\eqref{eq:app-batch-activation-gradient} documents the batch-level
quantity retained in the experimental logs. The sample-normalized
Hilbert-space score $\mathsf{AG}_l$ used in the theoretical comparisons is
defined separately in Remark~\ref{rem:ag-normalization}. For a fixed batch
construction, both statistics vanish exactly when all recorded
activation gradients vanish, but their nonzero numerical scales need not
agree. Quantitative comparisons therefore use a fixed analysis-batch
protocol within each model family.

\subsection{CIFAR-10 and CIFAR-100}
\label{app:exp-cifar}

\paragraph{Data and preprocessing.}
CIFAR-10 and CIFAR-100 each contain $50{,}000$ training images and
$10{,}000$ test images at resolution $32\times32$. Training images are
randomly cropped after four-pixel padding and randomly flipped
horizontally. Test images receive no random augmentation. Images are then
normalized using the mean and standard deviation of the corresponding
dataset. CIFAR-10 and CIFAR-100 use 10 and 100 output classes,
respectively.

\paragraph{Architecture.}
Both datasets use the same stagewise ResNet-8 backbone. The base channel
count is 16, and the three stage widths are 16, 32, and 64. Transitions
between stages downsample with stride 2. We sweep $N=1,\ldots,100$, using
the cyclic block-allocation rule described above.

\begin{table*}[t]
\centering
\small
\begin{tabularx}{\textwidth}{@{}lY@{}}
\toprule
Configuration item & CIFAR-10 and CIFAR-100 setting \\
\midrule
Objective and optimizer
    & Cross-entropy loss; SGD with momentum. \\
Initial learning rate
    & $0.1$, multiplied by $0.1$ after epochs 100 and 150. \\
Momentum and weight decay
    & Momentum $0.9$; weight decay $5\times10^{-4}$. \\
Batch size and random seed
    & Batch size 128; seed 1. \\
Maximum duration
    & 300 epochs. \\
Early stopping
    & Enabled only after at least 180 epochs. Training stops when the
      training loss fails to improve by more than $10^{-4}$ for 30
      consecutive epochs. \\
Saved checkpoints
    & The final checkpoint (\texttt{last}) and the checkpoint with the
      lowest training loss (\texttt{best\_train}). Validation and test
      results are not used for checkpoint selection. \\
\bottomrule
\end{tabularx}
\caption{Training protocol for the CIFAR depth sweeps.}
\label{tab:app-cifar-protocol}
\end{table*}

\subsection{ImageNet-100}
\label{app:exp-imagenet100}

\paragraph{Subset construction and preprocessing.}
ImageNet-100 is constructed from ImageNet-1K and contains $128{,}982$
training images and $5{,}000$ validation images, with 50 validation images
per class. Training uses \texttt{RandomResizedCrop(224)} and random
horizontal flipping. At evaluation time, the shorter image side is
resized to 256 pixels and a $224\times224$ center crop is taken. Images
are normalized by the standard ImageNet mean and standard deviation.
The data are loaded with \texttt{torchvision.datasets.ImageFolder}, and
the number of classes is inferred from the directory structure.

\paragraph{Architecture and run configuration.}
To isolate the effects of dataset scale and depth, the ImageNet-100
experiments retain the three-stage CIFAR ResNet-8 backbone, including its
$3\times3$, stride-1 stem. In particular, they do not introduce the
$7\times7$, stride-2 convolution or max pooling used by standard
ImageNet ResNets. The initial sweep uses
$N\in\{1,5,9,\ldots,49\}$, with base channel count 32. Each configuration
is run for 300 epochs with global batch size 128, automatic mixed
precision, and data parallelism across four GPUs.


\subsection{GPT-2 Depth Sweep on Ascend}
\label{app:exp-gpt2}

\paragraph{Model architecture.}
The GPT-2 experiments use the MCore \texttt{GPTModel} implementation in
Megatron-LM/MindSpeed. The $L=12$ baseline follows the main dimensions of
GPT-2 Small: hidden size 768, feed-forward size 3072, 12 attention heads,
and maximum sequence length 1024. It uses learned absolute positional
embeddings, LayerNorm, GELU activations, attention and hidden dropout of
0.1, and initialization standard deviation 0.02. The vocabulary contains
$50{,}257$ tokens, input and output embeddings are tied, and linear-layer
biases are retained.

Only the number of decoder blocks changes across the controlled depth
sweep; all other architectural dimensions are held fixed. The parameter
count therefore increases with depth. The extended-depth configurations
reported in the main paper follow the same architectural conventions.

\paragraph{FineWeb-Edu data.}
The training corpus is a FineWeb-Edu subset containing approximately
$310{,}000{,}347$ GPT-2 tokens. The data are divided into 98\% training,
1\% validation, and 1\% test splits. The training split contains
approximately $296{,}715$ packed sequences of length 1024. With global
batch size 32, we define one \emph{round} as 9,273 optimizer steps,
approximately one complete pass over the training split.

\begin{table*}[t]
\centering
\small
\begin{tabularx}{\textwidth}{@{}lY@{}}
\toprule
Configuration item & GPT-2 depth-sweep setting \\
\midrule
Numerical precision and device allocation
    & BF16; one Ascend NPU per model for the initial controlled-depth runs. \\
Batching
    & Micro-batch size 4, global batch size 32, and eight gradient
      accumulation steps. \\
Optimizer
    & Adam with $\beta_1=0.9$, $\beta_2=0.95$, weight decay 0.1, and
      gradient clipping at 1.0. \\
Learning-rate schedule
    & Initial rate $2.5\times10^{-4}$, minimum rate
      $2.5\times10^{-5}$, 1\% warmup, and cosine decay. \\
Random seed
    & 1234. \\
Training duration
    & The initial controlled-depth runs are capped at five rounds.
  Selected checkpoints are subsequently continued to a total of ten
  rounds. \\
Early stopping
    & The range of the mean training losses over the most recent ten
      rounds must be below $5\times10^{-5}$. With a ten-round cap, this
      criterion can first be evaluated only after the final round. \\
Saved checkpoints
    & The lowest-training-loss checkpoint (\texttt{best\_train}) and the
      final checkpoint (\texttt{last}) at every completed depth. \\
\bottomrule
\end{tabularx}
\caption{Optimization protocol for the documented GPT-2 runs.}
\label{tab:app-gpt2-protocol}
\end{table*}

The completed schedules documented in
Table~\ref{tab:app-gpt2-protocol} cover the controlled-depth experiments.
The extended-depth configurations reported in the main paper use the
same model definition and per-update optimization recipe.


\subsection{Pythia Continued-Pretraining Analysis}
\label{app:exp-pythia}

The Pythia study considers six deduplicated pretrained model
configurations spanning a range of model sizes. Each model is initialized
from the official \texttt{step143000} checkpoint and then continued
pretraining on FineWeb-Edu before activation-gradient measurement.

During measurement, model parameters are frozen while gradients are
retained for hidden activations. Activation-gradient norms are recorded
at decoder-block inputs using the fixed analysis protocol described in
Equation~\eqref{eq:app-batch-activation-gradient}. Because the Pythia
configurations vary in architectural dimensions in addition to depth,
these results provide cross-model evidence rather than a controlled
depth-only comparison.

\section{Preliminary Identities}
\label{app:preliminary-identities}

\begin{lemma}[Population directional derivative]
\label{lem:population-directional-derivative}
Under the first-order regularity assumption stated in the main paper, for every
$u\in\mathbb R^{p_l}$,
\begin{align}
D\Phi_l(0)[u]
&=\mathbb E[q_l(z_l,y)^{\top}J_l(z_l)u] \notag\\
&=(g_{\mathrm{pop}}^{(l)})^{\top}u
=\langle m_l,A_lu\rangle_{\mathcal H_l}.
\label{eq:app-pop-directional}
\end{align}
\end{lemma}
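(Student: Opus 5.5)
The plan is to prove the three equalities in \eqref{eq:app-pop-directional} in order: a chain rule, a transpose, and a tower-property step.

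\emph{First equality.} Fix $u\in\mathbb R^{p_l}$. For $\mathcal D$-almost every $(x,y)$, the sample-wise loss $\theta\mapsto\ell(f_{l,\theta}(x),y)=\ell\bigl(f_{\mathrm{top}}^{(l)}(z_l+h_{l,\theta}(z_l)),y\bigr)$ is a composition of $\theta\mapsto z_l+h_{l,\theta}(z_l)$ with $w\mapsto\ell(f_{\mathrm{top}}^{(l)}(w),y)$.
\begin{itemize}
\item The inner map is Fr\'echet differentiable at $\theta=0$ with Jacobian $J_l(z_l)$, by Assumption~\ref{ass:first-order-regularity}. Since $h_{l,0}\equiv0$, it takes the value $z_l$ there.
\item The outer map is Fr\'echet differentiable at $w=z_l$ with gradient $q_l(z_l,y)$.
\end{itemize}
The chain rule for Fr\'echet derivatives therefore gives the sample-wise directional derivative $q_l(z_l,y)^{\top}J_l(z_l)u$.

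Assumption~\ref{ass:first-order-regularity} also states that $\Phi_l$ is Fr\'echet differentiable at $0$ and that its derivative is obtained by interchanging differentiation and expectation. Hence $D\Phi_l(0)[u]=\mathbb E[q_l^{\top}J_lu]$. This expectation is finite by Cauchy--Schwarz:
\[
\mathbb E|q_l^{\top}J_lu|\le\|u\|_2\bigl(\mathbb E\|q_l\|_2^2\bigr)^{1/2}\bigl(\mathbb E\|J_l\|_\sigma^2\bigr)^{1/2}.
\]

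\emph{Second equality.} Write $q_l^{\top}J_lu=(J_l^{\top}q_l)^{\top}u$. The same bound shows $J_l^{\top}q_l$ is integrable, so linearity of expectation gives $\mathbb E[(J_l^{\top}q_l)^{\top}u]=(g_{\mathrm{pop}}^{(l)})^{\top}u$, using the definition of $g_{\mathrm{pop}}^{(l)}$.

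\emph{Third equality.} Since $J_l(z_l)u$ is $\sigma(z_l)$-measurable, the tower property gives
\[
\mathbb E[q_l^{\top}J_lu]=\mathbb E\bigl[\mathbb E[q_l\mid z_l]^{\top}J_l(z_l)u\bigr]=\mathbb E_{z\sim\mu_l}[m_l(z)^{\top}(A_lu)(z)]=\langle m_l,A_lu\rangle_{\mathcal H_l}.
\]
This is the step needing the most care, since pulling a measurable factor out of a conditional expectation requires integrability. I would handle it as follows.
\begin{itemize}
\item The product $q_l^{\top}J_lu$ is integrable, as shown above.
\item $A_lu\in\mathcal H_l$, because $\mathbb E\|J_lu\|_2^2\le\|u\|_2^2\,\mathbb E\|J_l\|_\sigma^2<\infty$.
\item $m_l\in\mathcal H_l$, by conditional Jensen, as noted in Assumption~\ref{ass:first-order-regularity}.
\end{itemize}
With these, the "taking out what is known" rule applies componentwise to the finite sum $\sum_k (q_l)_k (J_lu)_k$. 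If one prefers, it can be justified by truncating $J_lu$ on $\{\|J_lu\|\le K\}$ and letting $K\to\infty$ by dominated convergence. The resulting inner product is finite by Cauchy--Schwarz in $\mathcal H_l$.

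Finally, the identity $m_l(z_l)=\mathbb E[q_l\mid z_l]$ holds only almost surely. This is harmless, because $\mathcal H_l$ consists of $\mu_l$-equivalence classes.
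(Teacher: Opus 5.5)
Your proposal is correct and follows the same route as the paper's proof. Both use the sample-wise Fr\'echet chain rule, the assumed interchange of differentiation and expectation, the definition of $g_{\mathrm{pop}}^{(l)}$, and the tower property based on the $\sigma(z_l)$-measurability of $J_l(z_l)u$; your Cauchy--Schwarz and componentwise integrability checks add rigor the paper leaves implicit.
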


\begin{proof}
For $\mathcal D$-almost every $(x,y)$, the Fr\'echet chain rule gives
\[
D_{\theta_l}\ell(f_{l,0}(x),y)[u]
=q_l(z_l,y)^{\top}J_l(z_l)u.
\]

The first-order regularity assumption stated in the main paper permits
differentiation
under the expectation, giving the first equality in
\eqref{eq:app-pop-directional}. The second follows from the definition
of $g_{\mathrm{pop}}^{(l)}$. Since $J_l(z_l)u$ is measurable with
respect to $z_l$, conditional expectation gives
\[
\mathbb E[q_l^{\top}J_lu]
=\mathbb E[m_l(z_l)^{\top}J_l(z_l)u]
=\langle m_l,A_lu\rangle_{\mathcal H_l}.
\]
\end{proof}

\begin{lemma}[Bounded tangent operator and adjoint identity]
\label{lem:adjoint-projection}
The operator $A_l:\mathbb R^{p_l}\to\mathcal H_l$ is bounded. Its
Hilbert adjoint satisfies
$A_l^{*}m_l=g_{\mathrm{pop}}^{(l)}$. Consequently,
\begin{equation}
g_{\mathrm{pop}}^{(l)}=0
\quad\Longleftrightarrow\quad
\Pi_lm_l=0
\quad\Longleftrightarrow\quad
\mathcal V_l=0.
\label{eq:app-pop-zero-equivalence}
\end{equation}
\end{lemma}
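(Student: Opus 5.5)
To prove that $A_l$ is bounded, we use Cauchy--Schwarz together with the moment bound in Assumption~\ref{ass:first-order-regularity}. For $u\in\mathbb R^{p_l}$,
\[
\|A_lu\|_{\mathcal H_l}^2=\mathbb E\|J_l(z_l)u\|_2^2\le \mathbb E\|J_l(z_l)\|_\sigma^2\,\|u\|_2^2 .
\]
Hence $A_lu\in\mathcal H_l$, and $\|A_l\|\le(\mathbb E\|J_l\|_\sigma^2)^{1/2}<\infty$. Boundedness is also automatic from linearity on a finite-dimensional domain, but the explicit estimate is what shows that $A_l$ maps into $\mathcal H_l$ in the first place.

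For the adjoint identity, $A_l^*:\mathcal H_l\to\mathbb R^{p_l}$ is characterized by $\langle a,A_lu\rangle_{\mathcal H_l}=(A_l^*a)^\top u$ for all $u$. Taking $a=m_l$, which lies in $\mathcal H_l$ by conditional Jensen, Lemma~\ref{lem:population-directional-derivative} gives
\[
\langle m_l,A_lu\rangle_{\mathcal H_l}=(g_{\mathrm{pop}}^{(l)})^\top u\quad\text{for every }u .
\]
Since $u$ is arbitrary, $A_l^*m_l=g_{\mathrm{pop}}^{(l)}$. An explicit formula is also available: $A_l^*a=\mathbb E[J_l(z_l)^\top a(z_l)]$, well defined by Cauchy--Schwarz. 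With it, the tower property $\mathbb E[J_l^\top q_l]=\mathbb E[J_l^\top m_l]$ yields the same identity directly.

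For the equivalences, we use the standard fact $\ker A_l^*=\operatorname{Range}(A_l)^\perp$. Because $\operatorname{Range}(A_l)=\mathcal T_h^{(l)}$ is finite-dimensional and hence closed, every $a\in\mathcal H_l$ decomposes as $a=\Pi_la+(I-\Pi_l)a$ with $(I-\Pi_l)a\in\ker A_l^*$. Therefore
\[
A_l^*a=A_l^*\Pi_la .
\]
Moreover, $A_l^*$ is injective on $\mathcal T_h^{(l)}$: if $b=A_lw$ and $A_l^*b=0$, then $\|b\|^2=\langle b,A_lw\rangle=(A_l^*b)^\top w=0$. Applying this with $a=m_l$ shows that $g_{\mathrm{pop}}^{(l)}=0$ if and only if $\Pi_lm_l=0$. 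Finally, $\Pi_lm_l=0$ if and only if $\mathcal V_l=\|\Pi_lm_l\|_{\mathcal H_l}=0$, by definiteness of the norm.

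The only delicate step is justifying that $A_l$ maps into $\mathcal H_l$ and that $m_l\in\mathcal H_l$, so that the adjoint is legitimately defined. Both follow from the square-integrability hypotheses in Assumption~\ref{ass:first-order-regularity}, so the argument should present no real obstacle.
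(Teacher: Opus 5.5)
Your proposal is correct and follows the paper's proof essentially line by line: the same operator-norm bound $\mathbb E\|J_l(z_l)u\|_2^2\le\mathbb E\|J_l(z_l)\|_\sigma^2\|u\|_2^2$ for boundedness, the adjoint identity read off from Lemma~\ref{lem:population-directional-derivative}, and the fact $\ker A_l^*=\operatorname{Range}(A_l)^\perp$ for the equivalences. Your decomposition $A_l^*m_l=A_l^*\Pi_lm_l$ combined with injectivity of $A_l^*$ on $\mathcal T_h^{(l)}$ is just a more explicit version of the paper's one-line step that $A_l^*m_l=0$ exactly when $m_l\perp\mathcal T_h^{(l)}$, i.e.\ when $\Pi_lm_l=0$.
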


\begin{proof}
For $u\in\mathbb R^{p_l}$,
\[
\|A_lu\|_{\mathcal H_l}^2
=\mathbb E\|J_l(z_l)u\|_2^2
\le\mathbb E\|J_l(z_l)\|_{\sigma}^2\|u\|_2^2,
\]
so $A_l$ is bounded. Lemma~\ref{lem:population-directional-derivative}
shows
$\langle m_l,A_lu\rangle_{\mathcal H_l}
=(g_{\mathrm{pop}}^{(l)})^{\top}u$ for every $u$, hence
$A_l^{*}m_l=g_{\mathrm{pop}}^{(l)}$.
Now $A_l^{*}m_l=0$ exactly when $m_l$ is orthogonal to
$\operatorname{Range}(A_l)=\mathcal T_h^{(l)}$, which is equivalent to
$\Pi_lm_l=0$. The final equivalence is the definition of $\mathcal V_l$.
\end{proof}

\begin{lemma}[Empirical directional derivative and projection]
\label{lem:empirical-directional-derivative}
For every fixed sample satisfying the differentiability conditions in the main paper's first-order regularity assumption and every
$u\in\mathbb R^{p_l}$,

\begin{equation}
D\Phi_{S,l}(0)[u]
=(g_S^{(l)})^{\top}u
=\langle q_{S,l},A_{S,l}u\rangle_{S,l}.
\label{eq:app-emp-directional}
\end{equation}
Moreover,
\begin{equation}
g_S^{(l)}=0
\quad\Longleftrightarrow\quad
\Pi_{S,l}q_{S,l}=0
\quad\Longleftrightarrow\quad
\mathcal V_{S,l}=0.
\label{eq:app-emp-zero-equivalence}
\end{equation}
\end{lemma}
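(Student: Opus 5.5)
The plan is to mirror the proofs of Lemma~\ref{lem:population-directional-derivative} and Lemma~\ref{lem:adjoint-projection} on the fixed sample, where all expectations become finite averages. In this setting no interchange of differentiation and expectation needs to be justified.

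First, the directional derivative. For each $i$ with $(x_i,y_i)$ in the sample, the standing differentiability hypotheses apply at $z_{l,i}$: $z\mapsto\ell(f_{\mathrm{top}}^{(l)}(z),y_i)$ is Fr\'echet differentiable at $z_{l,i}$, and $\theta_l\mapsto h_{l,\theta_l}(z_{l,i})$ is Fr\'echet differentiable at $0$. The Fr\'echet chain rule applied to $\theta_l\mapsto\ell\bigl(f_{\mathrm{top}}^{(l)}(z_{l,i}+h_{l,\theta_l}(z_{l,i})),y_i\bigr)$ then gives the derivative $q_{l,i}^{\top}J_l(z_{l,i})u$ in direction $u$. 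Because $\Phi_{S,l}$ is a finite average of these maps, it is Fr\'echet differentiable at $0$ by linearity, and
\[
D\Phi_{S,l}(0)[u]=M^{-1}\sum_{i=1}^{M}q_{l,i}^{\top}J_l(z_{l,i})u .
\]
This equals $(g_S^{(l)})^{\top}u$ by the definition of $g_S^{(l)}$. It also equals $\langle q_{S,l},A_{S,l}u\rangle_{S,l}$ by the definitions of $A_{S,l}$ and the empirical inner product. This proves \eqref{eq:app-emp-directional}.

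Second, the zero equivalences. Since $\mathcal H_{S,l}$ is finite-dimensional, $A_{S,l}$ is a linear map between finite-dimensional Hilbert spaces and so has an adjoint. The identity above, valid for all $u$, gives $A_{S,l}^{*}q_{S,l}=g_S^{(l)}$. Hence $g_S^{(l)}=0$ if and only if $q_{S,l}\perp\operatorname{Range}(A_{S,l})$. Because $\Pi_{S,l}$ is the orthogonal projector onto this closed range, that orthogonality holds if and only if $\Pi_{S,l}q_{S,l}=0$. The last equivalence holds because $\mathcal V_{S,l}$ is by definition the norm of $\Pi_{S,l}q_{S,l}$, and a vector is zero exactly when its norm is zero.

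The only step needing care is the first: applying the chain rule pointwise at each sample point. This requires reading the regularity hypotheses as holding at the specific points $z_{l,i}$, which is exactly the convention fixed in Assumption~\ref{ass:first-order-regularity} for samples on which all sample-wise derivatives exist. Everything else is finite-dimensional linear algebra.
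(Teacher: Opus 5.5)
Your proposal is correct and follows the paper's proof essentially step for step. The paper also applies the sample-wise Fr\'echet chain rule and finite summation to obtain the directional-derivative identity, then reuses the adjoint relation $A_{S,l}^{*}q_{S,l}=g_S^{(l)}$ and the finite-dimensional orthogonality argument of Lemma~\ref{lem:adjoint-projection} for the zero equivalences; you simply spell out details the paper leaves implicit, namely differentiability of the finite average, existence of the adjoint, and that the empirical norm is a genuine norm.
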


\begin{proof}
The sample-wise chain rule and finite summation give
\[
D\Phi_{S,l}(0)[u]
=\frac1M\sum_{i=1}^{M}q_{l,i}^{\top}J_l(z_{l,i})u
=(g_S^{(l)})^{\top}u.
\]
The same expression is
$\langle q_{S,l},A_{S,l}u\rangle_{S,l}$. The finite-dimensional
orthogonality argument used in Lemma~\ref{lem:adjoint-projection} gives
\eqref{eq:app-emp-zero-equivalence}.
\end{proof}

\section{Proof of the Main-Paper Necessary-and-Sufficient Saturation Theorem}
\label{app:proof-main-theorem}

\begin{proof}
Fix $l\in\mathcal I$ and abbreviate $g=g_{\mathrm{pop}}^{(l)}$.
By Lemma~\ref{lem:population-directional-derivative}, the residual non-degeneracy condition stated in the main paper is the existence of $v$
with $g^{\top}v<0$. This implies $g\neq0$. Conversely, if $g\neq0$,
choosing $v=-g$ gives $g^{\top}v=-\|g\|_2^2<0$; the interior-point
condition on $\Theta_l$ makes sufficiently small steps feasible. Thus
statements 1 and 2 are equivalent. Lemma~\ref{lem:adjoint-projection}
gives the equivalence of statements 2, 3, and 4.

Suppose $g\neq0$. The fixed zero-state descent-compatible update
assumption stated in the main paper and
Lemma~\ref{lem:population-directional-derivative} give
\[
D\Phi_l(0)[d_l(g)]=g^{\top}d_l(g)<0.
\]
Fr\'echet differentiability therefore yields $\bar\eta_l>0$ such that
$\Phi_l(\eta d_l(g))<\Phi_l(0)$ for every
$\eta\in(0,\bar\eta_l)$. Conversely, if $g=0$, zero-state descent
compatibility gives $d_l(g)=0$, so strict decrease is impossible. This
proves statement 5.

Since $\mathcal I$ is finite,
$\mathcal V_{\mathrm{depth}}=0$ if and only if every
$\mathcal V_l=0$. The local equivalences identify this with depth-wide
first-order saturation. Lemma~\ref{lem:empirical-directional-derivative}
gives the empirical result by the same argument, with a threshold
$\bar\eta_{S,l}>0$.
\end{proof}

\begin{corollary}[Quantitative descent under local smoothness]
\label{cor:local-smoothness-descent}
Fix candidate $l$ and suppose ordinary gradient descent is used, so
$d_l(g)=-g$. If $\nabla\Phi_l$ is $L_l$-Lipschitz on a neighborhood
containing the segment
$\{-\eta g_{\mathrm{pop}}^{(l)}:0\le\eta\le1/L_l\}$, then
\[
\Phi_l\!\left(-\eta g_{\mathrm{pop}}^{(l)}\right)
\le \Phi_l(0)
-\eta\left(1-\frac{L_l\eta}{2}\right)
\|g_{\mathrm{pop}}^{(l)}\|_2^2
\]
for every $\eta\in(0,1/L_l]$. In particular, at $\eta=1/L_l$ the
one-step decrease is at least
$\|g_{\mathrm{pop}}^{(l)}\|_2^2/(2L_l)$. The same statement holds for
$\Phi_{S,l}$ with empirical smoothness constant $L_{S,l}$.
\end{corollary}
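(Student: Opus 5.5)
This is the standard descent lemma applied along the gradient-descent ray. Set $g:=g_{\mathrm{pop}}^{(l)}$. If $g=0$ both sides coincide and there is nothing to prove, so assume $g\neq0$. Define the scalar function $\varphi(\eta):=\Phi_l(-\eta g)$ on $[0,1/L_l]$. By hypothesis the segment $\{-\eta g:0\le\eta\le1/L_l\}$ lies in a neighborhood on which $\Phi_l$ is differentiable with $L_l$-Lipschitz gradient. Lemma~\ref{lem:population-directional-derivative} identifies $\nabla\Phi_l(0)$ with $g$, since $D\Phi_l(0)[u]=g^{\top}u$.

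Along the segment, $\varphi$ is continuously differentiable with $\varphi'(s)=-\nabla\Phi_l(-sg)^{\top}g$. The fundamental theorem of calculus therefore gives
\[
\Phi_l(-\eta g)-\Phi_l(0)=-\int_0^{\eta}\nabla\Phi_l(-sg)^{\top}g\,ds .
\]
Next add and subtract $\nabla\Phi_l(0)^{\top}g=\|g\|_2^2$ inside the integral. Cauchy--Schwarz and the Lipschitz bound give
\[
\|\nabla\Phi_l(-sg)-\nabla\Phi_l(0)\|_2\le L_l s\|g\|_2,
\]
so that
\[
\Phi_l(-\eta g)\le\Phi_l(0)-\eta\|g\|_2^2+\int_0^{\eta}L_l s\|g\|_2^2\,ds
=\Phi_l(0)-\eta\Bigl(1-\frac{L_l\eta}{2}\Bigr)\|g\|_2^2 .
\]
Evaluating at $\eta=1/L_l$ yields the decrease $\|g\|_2^2/(2L_l)$.

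The one delicate point is regularity. The Lipschitz hypothesis on $\nabla\Phi_l$ presupposes that $\Phi_l$ is differentiable throughout the neighborhood, and that its gradient at the origin is the Fréchet gradient $g$ supplied by Assumption~\ref{ass:first-order-regularity}. I would read the corollary's hypothesis as including differentiability on that neighborhood. Under that reading, Lipschitz continuity makes the gradient continuous, which justifies applying the fundamental theorem of calculus to $\varphi$.

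The empirical version follows verbatim. Replace $\Phi_l$, $g_{\mathrm{pop}}^{(l)}$ and $L_l$ by $\Phi_{S,l}$, $g_S^{(l)}$ and $L_{S,l}$, and use Lemma~\ref{lem:empirical-directional-derivative} in place of Lemma~\ref{lem:population-directional-derivative}.
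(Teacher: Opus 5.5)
Your proposal is correct and follows essentially the paper's route. The paper simply invokes the standard descent lemma to get $\Phi_l(-\eta g)\le\Phi_l(0)-\eta\|g\|_2^2+(L_l\eta^2/2)\|g\|_2^2$ and then substitutes $g=g_{\mathrm{pop}}^{(l)}$, whereas you derive that lemma yourself via the fundamental theorem of calculus along the segment. Your reading that the hypothesis includes differentiability of $\Phi_l$ on the neighborhood, with $\nabla\Phi_l(0)=g$ by Lemma~\ref{lem:population-directional-derivative}, is a sensible clarification that the paper leaves implicit.
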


\begin{proof}
The standard descent lemma gives
$\Phi_l(-\eta g)\le\Phi_l(0)-\eta\|g\|_2^2
+(L_l\eta^2/2)\|g\|_2^2$. Substitute
$g=g_{\mathrm{pop}}^{(l)}$. The empirical proof is identical.
\end{proof}

\subsection{Closed Forms for the Growth Values}
\label{app:growth-value-closed-form}

Define
$G_l:=A_l^{*}A_l=\mathbb E[J_l(z_l)^{\top}J_l(z_l)]$ and
$\widehat G_{S,l}:=M^{-1}\sum_{i=1}^{M}J_l(z_{l,i})^{\top}J_l(z_{l,i})$.

\begin{lemma}[Gram-matrix representation]
\label{lem:growth-value-gram}
The growth values satisfy
\begin{equation}
\mathcal V_l^2
=(g_{\mathrm{pop}}^{(l)})^{\top}G_l^{\dagger}
 g_{\mathrm{pop}}^{(l)},
\qquad
\mathcal V_{S,l}^2
=(g_S^{(l)})^{\top}\widehat G_{S,l}^{\dagger}g_S^{(l)}.
\label{eq:app-growth-value-gram}
\end{equation}
\end{lemma}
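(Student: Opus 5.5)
The plan is to use the standard least-squares description of the orthogonal projection onto the range of a finite-rank operator. I treat the population case in detail; the empirical case is identical after replacing $A_l$, $\mathcal H_l$, $m_l$ by $A_{S,l}$, $\mathcal H_{S,l}$, $q_{S,l}$.

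First, $A_l:\mathbb R^{p_l}\to\mathcal H_l$ is bounded by Lemma~\ref{lem:adjoint-projection}, so $A_l^{*}$ exists. For $u,w\in\mathbb R^{p_l}$,
\[
\langle A_lu,A_lw\rangle_{\mathcal H_l}=\mathbb E[u^{\top}J_l(z_l)^{\top}J_l(z_l)w],
\]
so $G_l=A_l^{*}A_l$ is a symmetric positive semidefinite $p_l\times p_l$ matrix. Its entries are finite by the second-moment bound on $\|J_l\|_\sigma$.

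Second, I claim $\Pi_l=A_lG_l^{\dagger}A_l^{*}$. Let $P:=A_lG_l^{\dagger}A_l^{*}$. It is self-adjoint because $G_l^{\dagger}$ is symmetric. To check idempotence, write
\[
P^2=A_lG_l^{\dagger}G_lG_l^{\dagger}A_l^{*}=A_lG_l^{\dagger}A_l^{*},
\]
using the Moore--Penrose identity $G^{\dagger}GG^{\dagger}=G^{\dagger}$. Hence $P$ is an orthogonal projector. For its range, note that $\operatorname{Range}(P)\subseteq\operatorname{Range}(A_l)$ trivially. Conversely,
\[
PA_l=A_lG_l^{\dagger}G_l=A_l\Pi_{\operatorname{Range}(G_l)}=A_l,
\]
because $\ker G_l=\ker A_l$: if $G_lu=0$ then $\|A_lu\|^2=u^{\top}G_lu=0$. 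Thus $\operatorname{Range}(P)=\mathcal T_h^{(l)}$, and $P=\Pi_l$.

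Third, compute the squared norm:
\[
\mathcal V_l^2=\langle\Pi_lm_l,m_l\rangle_{\mathcal H_l}=\langle A_lG_l^{\dagger}A_l^{*}m_l,m_l\rangle=(A_l^{*}m_l)^{\top}G_l^{\dagger}(A_l^{*}m_l).
\]
Substituting $A_l^{*}m_l=g_{\mathrm{pop}}^{(l)}$ from Lemma~\ref{lem:adjoint-projection} gives the population formula.

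For the empirical formula, Lemma~\ref{lem:empirical-directional-derivative} gives $A_{S,l}^{*}q_{S,l}=g_S^{(l)}$. Also $A_{S,l}^{*}A_{S,l}=\widehat G_{S,l}$ under the $M^{-1}$-weighted inner product, so the same argument applies verbatim.

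The only delicate step is the kernel identity $\ker G_l=\ker A_l$, which justifies $A_lG_l^{\dagger}G_l=A_l$ when $G_l$ is singular. Everything else is routine pseudoinverse algebra.
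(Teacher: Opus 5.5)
Your proof is correct and follows the same route as the paper: identify $\Pi_l=A_lG_l^{\dagger}A_l^{*}$ (and its empirical analogue under the $M^{-1}$-weighted inner product), then substitute $A_l^{*}m_l=g_{\mathrm{pop}}^{(l)}$ and $A_{S,l}^{*}q_{S,l}=g_S^{(l)}$. The paper only asserts the projector identity. You also verify it, via self-adjointness, idempotence through $G^{\dagger}GG^{\dagger}=G^{\dagger}$, and the range equality through $\ker G_l=\ker A_l$, which correctly handles the singular-Gram case.
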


\begin{proof}
Because $\mathcal T_h^{(l)}=\operatorname{Range}(A_l)$,
$\Pi_l=A_lG_l^{\dagger}A_l^{*}$. Hence
\[
\mathcal V_l^2
=\langle m_l,A_lG_l^{\dagger}A_l^{*}m_l\rangle_{\mathcal H_l}
=(g_{\mathrm{pop}}^{(l)})^{\top}G_l^{\dagger}g_{\mathrm{pop}}^{(l)}.
\]
The empirical formula follows from the same finite-dimensional
projection identity for $A_{S,l}$ under the empirical inner product.
\end{proof}

\begin{remark}[Parameterization invariance and fixed-sample interpolation]
The geometric definition of $\mathcal V_l$ depends only on
$\operatorname{Range}(A_l)$ and is therefore invariant under every
regular local reparameterization that preserves this range. For an
invertible coordinate change $\theta_l=T\alpha_l$, one has
$A_l\mapsto A_lT$,
$g_{\mathrm{pop}}^{(l)}\mapsto T^{\top}g_{\mathrm{pop}}^{(l)}$, and
$G_l\mapsto T^{\top}G_lT$. Applying
Lemma~\ref{lem:growth-value-gram} in either coordinate system gives the
same scalar $\mathcal V_l^2$, because the represented tangent subspace is
unchanged. By contrast, $\|g_{\mathrm{pop}}^{(l)}\|_2$ is
coordinate-dependent.

Fix a sample size $M$ and draw $S\sim\mathcal D^M$. Suppose the selected
standard block lies in an overparameterized regime in which sample-wise
tangent completeness holds almost surely at this fixed $M$; this
requires at least $r_l\ge M$ and
$\operatorname{rank}(\Psi_S^{(l)})=M$ almost surely. Then
$\mathcal V_{S,l}^2=M^{-1}\sum_i\|q_{l,i}\|_2^2$, and
\[
\mathbb E_{S\sim\mathcal D^M}\mathcal V_{S,l}^2
=\mathbb E\|m_l(z_l)\|_2^2
+\mathbb E\|q_l(z_l,y)-m_l(z_l)\|_2^2
\ge \mathcal V_l^2.
\]
This is a fixed-$M$ interpolation identity, not an assertion that
sample-wise tangent completeness persists as $M\to\infty$ when the
feature dimension $r_l$ is fixed. It explains why the supplementary
consistency result estimates the population residual gradient rather
than asserting
$\mathcal V_{S,\mathrm{depth}}\to\mathcal V_{\mathrm{depth}}$ without
additional rank-stability and complexity assumptions.
\end{remark}

\section{Joint Insertion of Several Residual Blocks}
\label{app:joint-insertion}

Let $\mathcal K=(l_1,\ldots,l_K)$ be an ordered tuple of candidates
at distinct insertion locations, listed from upstream to downstream in
the fixed feed-forward network topology. Write the original network as
$f_{\mathrm{old}}^{*}=F_K\circ F_{K-1}\circ\cdots\circ F_0$, where the
maps $F_j$ are the original network segments between consecutive
insertion locations. Define
$T_{l_j,\theta_{l_j}}(w):=w+h_{l_j,\theta_{l_j}}(w)$ and
\begin{equation}
f_{\mathcal K,\theta_{\mathcal K}}
:=F_K\circ T_{l_K,\theta_{l_K}}\circ F_{K-1}\circ\cdots\circ
T_{l_1,\theta_{l_1}}\circ F_0,
\label{eq:app-joint-model}
\end{equation}
with $K:=|\mathcal K|$,
$\theta_{\mathcal K}:=(\theta_{l_1},\ldots,\theta_{l_K})$ in the product
space $\prod_{j=1}^{K}\mathbb R^{p_{l_j}}$, and
$\Phi_{\mathcal K}(\theta_{\mathcal K})
:=R(f_{\mathcal K,\theta_{\mathcal K}})$.

\begin{corollary}[Joint insertion of independently parameterized blocks]
\label{cor:joint-insertion}
Assume that the first-order regularity assumption stated in the main paper for every
$l\in\mathcal K$. Assume additionally that, for
$\mathcal D$-almost every $(x,y)$, the joint sample loss is Fr\'echet
differentiable with respect to $\theta_{\mathcal K}$ at the joint
origin, and that $\Phi_{\mathcal K}$ is Fr\'echet differentiable there
with derivative obtained by interchanging differentiation and
expectation. Suppose the active parameter blocks are disjoint and every
$h_{l,0}\equiv0$ globally.

A \emph{strict joint first-order descent direction} is a vector
$u_{\mathcal K}$ satisfying
$D\Phi_{\mathcal K}(0)[u_{\mathcal K}]<0$. Such a direction exists if
and only if at least one candidate in $\mathcal K$ satisfies the residual non-degeneracy condition stated in the main paper.
\end{corollary}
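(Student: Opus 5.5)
The plan is to reduce the joint statement to the single-candidate identities of Lemma~\ref{lem:population-directional-derivative} by computing the joint derivative block by block. Since the active parameter blocks are disjoint, the product parameter space decomposes as $\prod_{j}\mathbb R^{p_{l_j}}$. By the assumed joint Fr\'echet differentiability, the derivative $D\Phi_{\mathcal K}(0)$ is a linear functional on this product, so it equals the sum of its partial derivatives:
\[
D\Phi_{\mathcal K}(0)[u_{\mathcal K}]=\sum_{j=1}^{K}D\Phi_{\mathcal K}(0)[\iota_j u_{l_j}],
\]
where $\iota_j$ is the canonical injection into block $j$. The whole argument then rests on identifying each partial derivative with the single-candidate derivative $D\Phi_{l_j}(0)[u_{l_j}]$.

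\textbf{Step 1: identify each partial derivative.} Fix $j$ and set all other blocks to zero. Because every $h_{l,0}\equiv 0$ globally, each $T_{l_i,0}$ with $i\neq j$ is the identity map. Hence
\[
f_{\mathcal K,\iota_j\theta}=F_K\circ\cdots\circ F_j\circ T_{l_j,\theta}\circ F_{j-1}\circ\cdots\circ F_0 .
\]
Grouping the segments gives $f_{\mathrm{top}}^{(l_j)}=F_K\circ\cdots\circ F_j$ and $f_{\mathrm{bot}}^{(l_j)}=F_{j-1}\circ\cdots\circ F_0$, so this expression is exactly the single-candidate expanded model $f_{l_j,\theta}$. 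Therefore $\Phi_{\mathcal K}\circ\iota_j=\Phi_{l_j}$ identically near the origin, and the chain rule gives $D\Phi_{\mathcal K}(0)[\iota_j u]=D\Phi_{l_j}(0)[u]=(g_{\mathrm{pop}}^{(l_j)})^{\top}u$ by Lemma~\ref{lem:population-directional-derivative}. Combining over $j$,
\[
D\Phi_{\mathcal K}(0)[u_{\mathcal K}]=\sum_{j=1}^{K}(g_{\mathrm{pop}}^{(l_j)})^{\top}u_{l_j},
\]
so the joint gradient is the concatenation $(g_{\mathrm{pop}}^{(l_1)},\ldots,g_{\mathrm{pop}}^{(l_K)})$.

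\textbf{Step 2: conclude.} A strict joint descent direction exists if and only if this concatenated vector is nonzero. For the forward direction, take $u_{\mathcal K}$ equal to minus the concatenated gradient. For the converse, if every block gradient vanishes then the joint derivative is identically zero and no direction gives strict descent. The concatenated vector is nonzero if and only if some $g_{\mathrm{pop}}^{(l_j)}\neq 0$. By the equivalence of items 1 and 2 in Theorem~\ref{thm:first-order-saturation}, this holds exactly when some candidate in $\mathcal K$ satisfies Condition~\ref{cond:residual-nondegeneracy}.

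\textbf{Main obstacle.} The delicate point is Step 1, namely justifying that the joint derivative is additive across blocks with no cross-interaction terms. At first order this is automatic once joint Fr\'echet differentiability is assumed, because a Fr\'echet derivative is linear and is determined by its restrictions to the coordinate subspaces. Cross effects between blocks would appear only at second order. The essential ingredient for the restriction identity is the global condition $h_{l,0}\equiv 0$: it makes the other inserted blocks act exactly as the identity, so the downstream and upstream segments coincide with the single-candidate decomposition $f_{\mathrm{top}}^{(l_j)}\circ f_{\mathrm{bot}}^{(l_j)}$. I would state this identity explicitly, and also check that the interchange of differentiation and expectation assumed for $\Phi_{\mathcal K}$ is consistent with the one assumed for each $\Phi_{l_j}$.
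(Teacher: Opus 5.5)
Your proposal is correct and follows essentially the same route as the paper. It uses the pointwise restriction identity $\Phi_{\mathcal K}\circ\iota_j=\Phi_{l_j}$, which comes from $h_{l,0}\equiv0$, and then linearity of the joint Fr\'echet derivative to get the concatenated gradient $(g_{\mathrm{pop}}^{(l_1)},\ldots,g_{\mathrm{pop}}^{(l_K)})$, and finally the equivalence of items 1 and 2 in Theorem~\ref{thm:first-order-saturation}. The interchange-consistency check you flag needs no separate work: applying uniqueness of the Fr\'echet derivative to $\Phi_{\mathcal K}\circ\iota_j=\Phi_{l_j}$ already forces the two derivatives to agree.
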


\begin{proof}
Let $\iota_j:\mathbb R^{p_{l_j}}\to\prod_{r=1}^{K}\mathbb R^{p_{l_r}}$
be the canonical injection into block $j$. Because every
$h_{l,0}\equiv0$, each full residual transformation held at its
designated origin is
$T_{l,0}(w)=w$ on the full ambient space. Therefore, for every
$\theta_{l_j}$,
\[
\Phi_{\mathcal K}(\iota_j\theta_{l_j})=\Phi_{l_j}(\theta_{l_j});
\]
no differentiability of the intermediate network segments is needed for
this pointwise identity.

Let $L:=D\Phi_{\mathcal K}(0)$, which is a continuous linear functional
by joint Fr\'echet differentiability. Restricting $L$ to coordinate
block $j$ and applying Lemma~\ref{lem:population-directional-derivative}
gives
\[
L[\iota_j u_j]
=D\Phi_{l_j}(0)[u_j]
=(g_{\mathrm{pop}}^{(l_j)})^{\top}u_j.
\]
Every joint direction decomposes as
$u_{\mathcal K}=\sum_{j=1}^{K}\iota_j u_j$, so linearity yields
\[
D\Phi_{\mathcal K}(0)[u_{\mathcal K}]
=\sum_{j=1}^{K}(g_{\mathrm{pop}}^{(l_j)})^{\top}u_j.
\]
Thus the joint derivative is represented by the concatenated vector
$(g_{\mathrm{pop}}^{(l_1)},\ldots,g_{\mathrm{pop}}^{(l_K)})$. It has a
negative direction if and only if at least one component is nonzero,
which is equivalent to the residual non-degeneracy condition stated in the main paper for
at least one candidate by the main paper's necessary-and-sufficient saturation theorem.
\end{proof}

\section{Proof of the Main-Paper Activation-Gradient Characterization Theorem}
\label{app:proof-activation-certificate}

\begin{proof}
If $m_l\in\mathcal T_h^{(l)}$, then $\Pi_lm_l=m_l$, so
$\mathcal V_l=\|m_l\|_{\mathcal H_l}$. The main paper's necessary-and-sufficient saturation theorem gives the population equivalence. If
$q_{S,l}\in\operatorname{Range}(A_{S,l})$, then
$\Pi_{S,l}q_{S,l}=q_{S,l}$, yielding the empirical equivalence.

For the standard block, every matrix direction $\Delta V_l$ satisfies
$J_l(z)\operatorname{vec}(\Delta V_l)=\Delta V_l\psi_l(z)$. Therefore,
\begin{align*}
D\Phi_l(0)[\operatorname{vec}(\Delta V_l)]
&=\mathbb E[q_l(z_l,y)^{\top}\Delta V_l\psi_l(z_l)]\\
&=\langle C_{\mathrm{pop}}^{(l)},\Delta V_l\rangle_F.
\end{align*}
Hence
$g_{\mathrm{pop}}^{(l)}=\operatorname{vec}(C_{\mathrm{pop}}^{(l)})$
under the fixed vectorization convention, and the population residual non-degeneracy condition stated in the main paper holds exactly when
$C_{\mathrm{pop}}^{(l)}\neq0$. The empirical statement follows
identically and does not use residual-signal realizability.

If $U_l$ and $V_l$ are both treated as active raw parameters, then at
$V_l=0$ the derivative with respect to $U_l$ is
$V_lD_{U_l}\psi_{l,U_l}(z)=0$. Hence the full tangent range is identical
to the range obtained by fixing $U_l=U_{0,l}$ and varying only $V_l$;
this reduction loses no first-order directions.

The sample tangent range in matrix form is
$\{\Delta V_l\Psi_S^{(l)}:\Delta V_l\in\mathbb R^{N_l\times r_l}\}$.
Under the tuple--matrix identification,
$\langle B,C\rangle_{S,l}=M^{-1}\langle B,C\rangle_F$; multiplication
by a positive scalar does not change orthogonality or the associated
projector. A matrix belongs to this range exactly when each row belongs
to the row space of $\Psi_S^{(l)}$. Hence the matrix representation of
$\Pi_{S,l}q_{S,l}$ is $Q_S^{(l)}\Pi_{\Psi,l}$ and
$\mathcal V_{S,l}=M^{-1/2}\|Q_S^{(l)}\Pi_{\Psi,l}\|_F$.
Moreover,
$C_S^{(l)}=0$ if and only if
$Q_S^{(l)}\Pi_{\Psi,l}=0$. The stacked tangent operator is surjective
exactly when the row space of $\Psi_S^{(l)}$ is all of $\mathbb R^M$,
equivalently when $\operatorname{rank}(\Psi_S^{(l)})=M$. Then
$\Pi_{\Psi,l}=I_M$ and $C_S^{(l)}=0$ if and only if $Q_S^{(l)}=0$.
Applying the local equivalences at every candidate gives the depth-wide
statement.
\end{proof}

\begin{remark}[Approximate residual-signal realizability]
Orthogonal Pythagoras gives
\begin{align*}
\mathcal V_l^2
&=\|m_l\|_{\mathcal H_l}^2
-\operatorname{dist}_{\mathcal H_l}
  (m_l,\mathcal T_h^{(l)})^2,\\
\mathcal V_{S,l}^2
&=\|q_{S,l}\|_{S,l}^2
-\operatorname{dist}_{S,l}
  (q_{S,l},\operatorname{Range}(A_{S,l}))^2.
\end{align*}
Consequently, if
$\operatorname{dist}_{\mathcal H_l}(m_l,\mathcal T_h^{(l)})
\le\varepsilon_{\mathrm{real}}\|m_l\|_{\mathcal H_l}$ for some
$\varepsilon_{\mathrm{real}}\in[0,1)$, then
$\mathcal V_l\ge\sqrt{1-\varepsilon_{\mathrm{real}}^2}\,
\|m_l\|_{\mathcal H_l}>0$ whenever $m_l\neq0$. The identical
conclusion holds on a fixed sample using the empirical distance and
norm. This approximate form is often more relevant than exact
realizability when the feature dimension is smaller than the sample
size.
\end{remark}

\section{Supplementary Finite-Sample Consistency Result}
\label{app:finite-sample-consistency}

Under the independent finite-sample probing assumption stated in the main paper, instantiate all empirical
quantities with $S=S_n^{\mathrm{probe}}$ and $M=n$. Define
$\xi_l(x,y):=J_l(z_l)^{\top}q_l(z_l,y)$ and
\begin{align*}
\widehat g_n^{(l)}
&:=\frac1n\sum_{i=1}^{n}
\xi_l(\widetilde x_i,\widetilde y_i),\\
\Gamma
&:=\max_{l\in\mathcal I}\|g_{\mathrm{pop}}^{(l)}\|_2,
&
\widehat\Gamma_n
&:=\max_{l\in\mathcal I}\|\widehat g_n^{(l)}\|_2.
\end{align*}
Let
$\sigma_l^2:=\mathbb E\|\xi_l-g_{\mathrm{pop}}^{(l)}\|_2^2$ and, for
a deterministic threshold $\tau_n>0$, define
$\widehat{\mathrm{Sat}}_n
:=\mathbf 1\{\widehat\Gamma_n\le\tau_n\}$.
By the main paper's necessary-and-sufficient saturation theorem,
\begin{align}
\Gamma=0
&\quad\Longleftrightarrow\quad
\mathcal V_{\mathrm{depth}}=0 \notag\\
&\quad\Longleftrightarrow\quad
f_{\mathrm{old}}^{*}\text{ is first-order depth-saturated}.
\label{eq:app-gamma-saturation}
\end{align}

\begin{theorem}[Consistent finite-sample saturation detection]
\label{thm:app-finite-sample-consistency}
Under the independent finite-sample probing assumption stated in the main paper, for every $t>0$,
\begin{equation}
\Pr\!\left(
\max_{l\in\mathcal I}
\|\widehat g_n^{(l)}-g_{\mathrm{pop}}^{(l)}\|_2\ge t
\right)
\le\frac{1}{nt^2}\sum_{l\in\mathcal I}\sigma_l^2.
\label{eq:app-uniform-probe-bound}
\end{equation}
If $\tau_n\to0$ and $n\tau_n^2\to\infty$, then
\begin{align*}
\Gamma=0
&\Longrightarrow
\Pr(\widehat{\mathrm{Sat}}_n=1)\to1,\\
\Gamma>0
&\Longrightarrow
\Pr(\widehat{\mathrm{Sat}}_n=0)\to1.
\end{align*}
In view of \eqref{eq:app-gamma-saturation}, this detector is consistent
for first-order depth saturation relative to the fixed protocol.
\end{theorem}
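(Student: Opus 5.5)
The proof has two parts: a uniform deviation bound obtained from Chebyshev's inequality in each candidate, combined with a union bound over the finite set $\mathcal I$, and then a two-case argument for consistency of the threshold detector.

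\textbf{Deviation bound.} Fix $l\in\mathcal I$. Under Assumption~\ref{ass:independent-probe}, the probe sample is i.i.d. and independent of the training data. Conditionally on the fixed reference model and protocol, the vectors $\xi_l(\widetilde x_i,\widetilde y_i)$ are therefore i.i.d. with mean $g_{\mathrm{pop}}^{(l)}$. They are square-integrable because $\mathbb E\|\xi_l\|_2^2<\infty$, so $\sigma_l^2<\infty$.

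Expanding the squared norm and using independence to kill the cross terms gives
\[
\mathbb E\|\widehat g_n^{(l)}-g_{\mathrm{pop}}^{(l)}\|_2^2=\sigma_l^2/n .
\]
Markov's inequality applied to the squared norm then yields
\[
\Pr\bigl(\|\widehat g_n^{(l)}-g_{\mathrm{pop}}^{(l)}\|_2\ge t\bigr)\le \sigma_l^2/(nt^2).
\]
The event that the maximum over $l$ is at least $t$ is the union of these finitely many events. A union bound over $\mathcal I$ gives \eqref{eq:app-uniform-probe-bound}.

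\textbf{Consistency.} By the reverse triangle inequality,
\[
\bigl|\|\widehat g_n^{(l)}\|_2-\|g_{\mathrm{pop}}^{(l)}\|_2\bigr|\le\|\widehat g_n^{(l)}-g_{\mathrm{pop}}^{(l)}\|_2 .
\]
The map taking a finite family of reals to its maximum is $1$-Lipschitz in the sup norm, so $|\widehat\Gamma_n-\Gamma|\le\max_l\|\widehat g_n^{(l)}-g_{\mathrm{pop}}^{(l)}\|_2$. Write $\Sigma:=\sum_l\sigma_l^2$.

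If $\Gamma=0$, a false non-saturation declaration $\widehat\Gamma_n>\tau_n$ forces the maximal deviation to be at least $\tau_n$. This has probability at most $\Sigma/(n\tau_n^2)$, which tends to $0$ because $n\tau_n^2\to\infty$.

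If $\Gamma>0$, then $\tau_n\to0$ gives $\tau_n\le\Gamma/2$ for all large $n$. A false saturation declaration $\widehat\Gamma_n\le\tau_n$ then forces a deviation of at least $\Gamma-\tau_n\ge\Gamma/2$. This has probability at most $4\Sigma/(n\Gamma^2)$, which tends to $0$.

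The final sentence of the theorem follows from \eqref{eq:app-gamma-saturation}, which is itself Theorem~\ref{thm:first-order-saturation} applied at each candidate: $\Gamma=0$ if and only if every $g_{\mathrm{pop}}^{(l)}=0$, if and only if $\mathcal V_{\mathrm{depth}}=0$.

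There is no real obstacle. The step needing the most care is justifying the i.i.d., conditional-on-model structure, so that Chebyshev's inequality applies with the population mean $g_{\mathrm{pop}}^{(l)}$ and without any adaptivity. This is exactly what the independence and fixed-protocol clauses of Assumption~\ref{ass:independent-probe} provide.
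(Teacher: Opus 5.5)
Your proof is correct and follows the paper's argument: the second-moment identity $\sigma_l^2/n$, Markov's inequality on the squared norm with a union bound over $\mathcal I$, and the same two-case threshold analysis. The only difference is in the $\Gamma>0$ case. The paper isolates a maximizing candidate $l_*$ and gets the slightly sharper bound $4\sigma_{l_*}^2/(n\Gamma^2)$. You instead use the $1$-Lipschitz property of the max together with the full union bound, which gives $4\sum_{l}\sigma_l^2/(n\Gamma^2)$; both bounds vanish as $n\to\infty$.
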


\begin{proof}
Independence and centering give
$\mathbb E\|\widehat g_n^{(l)}-g_{\mathrm{pop}}^{(l)}\|_2^2
=\sigma_l^2/n$. Markov's inequality applied to the squared norm and a
union bound over the finite set $\mathcal I$ prove
\eqref{eq:app-uniform-probe-bound}.

If $\Gamma=0$, then
\[
\Pr(\widehat{\mathrm{Sat}}_n=0)
=\Pr(\widehat\Gamma_n>\tau_n)
\le\frac{1}{n\tau_n^2}\sum_l\sigma_l^2\to0.
\]
If $\Gamma>0$, choose $l_*$ with
$\|g_{\mathrm{pop}}^{(l_*)}\|_2=\Gamma$. For all sufficiently large
$n$, $\tau_n<\Gamma/2$. On the event
$\widehat\Gamma_n\le\tau_n$,
$\|\widehat g_n^{(l_*)}-g_{\mathrm{pop}}^{(l_*)}\|_2>\Gamma/2$, and
therefore
$\Pr(\widehat{\mathrm{Sat}}_n=1)
\le4\sigma_{l_*}^2/(n\Gamma^2)\to0$.
\end{proof}

\begin{remark}[Coordinate dependence and practical near-saturation]
The statistic $\Gamma=\max_l\|g_{\mathrm{pop}}^{(l)}\|_2$ and the
threshold $\tau_n$ use Euclidean norms of residual-parameter gradients.
Their zero-versus-nonzero boundary is invariant under regular
reparameterization, but their numerical scales are protocol- and
coordinate-dependent. The consistency theorem should therefore be
interpreted relative to the fixed parameterization.

Exact saturation, $\Gamma=0$, is a knife-edge population hypothesis. A
practical separated test may instead fix $\delta\ge0$ and $t>0$ and
compare $H_0:\Gamma\le\delta$ with
$H_1:\Gamma\ge\delta+2t$ using decision threshold $\delta+t$. On the
event
$\max_l\|\widehat g_n^{(l)}-g_{\mathrm{pop}}^{(l)}\|_2<t$, the test is
correct under either hypothesis. Thus
\eqref{eq:app-uniform-probe-bound} bounds its error probability by
$\sum_l\sigma_l^2/(nt^2)$. Implementing the rule requires an upper
estimate of the variances $\sigma_l^2$; rate-optimal threshold selection
and adaptive power analysis require additional assumptions.
\end{remark}

\begin{remark}[Why an independent probe sample is needed]
The fixed-sample equivalences in the main theorems are deterministic and
may be evaluated on the training sample. The consistency theorem instead
interprets an empirical gradient as an estimator of its population
counterpart. Reusing the data that trained the reference model generally
breaks the conditional i.i.d. argument unless additional stability or
sample-splitting assumptions are introduced.
\end{remark}

\section{More experiments}
On four converged CIFAR-10 ResNets, we insert a globally zero-output residual block and fit only its terminal projection to a small negative activation-gradient target, testing whether the resulting first-order direction produces an immediate reduction in the empirical task loss.

\subsection{Local Effect of Activation-Gradient-Matched Insertions}
\label{sec:gradient-matched-insertion-results}

We next examine whether an activation-gradient-matched residual block
produces the immediate task-loss change predicted by the fixed-sample
first-order analysis. We evaluate four converged CIFAR-10 ResNet
checkpoints with reference depths 8, 14, 20, and 34.

For four converged ResNet configurations on CIFAR-10, we directly
construct one additional residual block by matching its output to a
small negative activation-gradient perturbation. At the selected
candidate location $l$, we freeze the reference model and cache the
training-set hidden representations $z_{l,i}$ together with their
activation gradients $q_{l,i}$. Both quantities are detached and treated
as a fixed regression dataset. The inserted block is a standard ResNet block. Its feature-producing parameters
$U_l$ are initialized using the standard ResNet initialization and then
held fixed, while the terminal output projection is initialized at
$V_l=0$ and is the only optimized parameter. We fit $V_l$ by gradient
descent on
\begin{equation*}
\mathcal{J}_{\mathrm{match}}^{(l)}(V_l)
:=
\frac{1}{2M}
\sum_{i=1}^{M}
\left\|V_l\psi_l(z_{l,i})
+
\alpha q_{l,i}
\right\|_2^2,
\end{equation*}
where the plus sign reflects the regression target
$-\alpha q_{l,i}$. The zero initialization makes the insertion globally
function-preserving before fitting, while training only $V_l$ restricts
the construction to the first-order tangent family generated by the
fixed residual features. After the auxiliary objective has converged,
the fitted block is inserted and evaluated immediately; no
classification-loss optimization or joint fine-tuning is performed.
The matching scale, optimization schedule, and candidate-selection rule
are fixed without using held-out performance. We report the matching
error, the relative magnitude of the resulting hidden-state
perturbation, and the changes in training  loss.
 
\begin{table}[t]
    \centering
    \caption{Immediate loss changes after inserting one
    activation-gradient-matched residual block. The fitted model is
    evaluated without subsequent classification-loss training.
    Positive gain denotes lower loss after insertion.}
    \label{tab:gradient_matched_insertion}
    \small
    \setlength{\tabcolsep}{2.3pt}
    \scalebox{0.85}{
        \begin{tabular}{@{}lcccc@{}}
            \toprule
            Reference
            & Train loss(ave)
            & $\Delta_{\mathrm{train}}^{\mathrm{match}}$
            & Test loss(ave)
            & $\Delta_{\mathrm{test}}^{\mathrm{match}}$ \\
            \midrule
            ResNet-8
            & $0.0427\!\rightarrow\!0.0420$
            & \makecell{$+0.0007$ \\ $\pm 0.0003$}
            & $0.3782\!\rightarrow\!0.3779$
            & \makecell{$+0.0003$ \\ $\pm 0.0009$} \\
            \midrule 
            ResNet-14
            & $0.0392\!\rightarrow\!0.0388$
            & \makecell{$+0.0004$ \\ $\pm 0.0002$}
            & $0.2537\!\rightarrow\!0.2531$
            & \makecell{$+0.0006$ \\ $\pm 0.0011$} \\
            \midrule
            ResNet-20
            & $0.0251\!\rightarrow\!0.0249$
            & \makecell{$+0.0002$ \\ $\pm <0.0001$}
            & $0.2015\!\rightarrow\!0.2017$
            & \makecell{$-0.0002$ \\ $\pm 0.0005$} \\
            \midrule
            ResNet-34
            & $0.0011\!\rightarrow\!0.0009$
            & \makecell{$+0.0002$ \\ $\pm <0.0001$}
            & $0.1891\!\rightarrow\!0.1895$
            & \makecell{$-0.0004$ \\ $\pm 0.0005$} \\
            \bottomrule
        \end{tabular}
    }
\end{table}
 
\subsubsection{Analysis.}
Table~\ref{tab:gradient_matched_insertion} reports the immediate loss changes
after inserting one activation-gradient-matched residual block, without
subsequent task-loss optimization or joint fine-tuning. The insertion
reduces the training loss for all four checkpoints, by
$7\times10^{-4}$, $4\times10^{-4}$, $2\times10^{-4}$, and
$2\times10^{-4}$ for ResNet-8, ResNet-14, ResNet-20, and ResNet-34.
Because the target is constructed from activation gradients on the same
training sample, this sign consistency is the main behavior predicted by
the fixed-sample first-order analysis. The results are consistent with the
fitted block capturing a descending component within the tangent family
generated by the fixed features.

The reductions are small, as expected from $\alpha=10^{-3}$ and from
evaluation before task-loss fine-tuning. Thus, the experiment tests an
immediate local improvement rather than the gain after fully optimizing
the expanded model. The reduction decreases from $7\times10^{-4}$ for
ResNet-8 to $2\times10^{-4}$ for ResNet-20 and ResNet-34, qualitatively
consistent with diminishing empirical first-order value of additional
depth. However, differing baseline losses and one insertion per
checkpoint prevent Table~\ref{tab:gradient_matched_insertion} from establishing a
monotone or calibrated depth--gain relationship. Test-loss point estimates improve by
$3\times10^{-4}$ and $6\times10^{-4}$ for ResNet-8 and ResNet-14, but
worsen by $2\times10^{-4}$ and $4\times10^{-4}$ for ResNet-20 and
ResNet-34. All changes are smaller than or comparable to the reported
uncertainty.
\end{document}